\tikzset{
    partial ellipse/.style args={#1:#2:#3}{
        insert path={+ (#1:#3) arc (#1:#2:#3)}
    }
}
\newtheorem{theorem}{Theorem}[section]
\theoremstyle{definition}
\newtheorem{proposition}[theorem]{Proposition}
\newtheorem{lemma}[theorem]{Lemma}
\newtheorem{corollary}[theorem]{Corollary}
\newtheorem{note}[theorem]{Note}
\newcommand{\R}{\mathbb{R}} 
\newcommand{\ent}{\mathcal{H}}
\newcommand{\K}{\text{KL}}
\newcommand{\E}{\mathbb{E}}
\newcommand{\prob}{\mathcal{M}}
\newcommand{\kl}{\text{KL}}
\title{Distributionally-Constrained Policy Optimization via Unbalanced Optimal Transport}
\author{Arash Givchi \thanks{arash.givchi@gmail.com}}
\author{Pei Wang}
\author{Junqi Wang}
\author{Patrick Shafto \thanks{patrick.shafto@gmail.com}}
\affil{Rutgers University-Newark}
\date{}
\begin{document}
\setlength\parindent{0pt}
\maketitle
\begin{abstract}
\noindent We consider constrained policy optimization in Reinforcement Learning, where the constraints are in form of  marginals on state visitations and global action executions. Given these distributions, we formulate policy optimization as unbalanced optimal transport over the space of occupancy measures. We propose a general purpose RL objective based on Bregman divergence and optimize it using Dykstra's algorithm. The approach admits an actor-critic algorithm for when the state or action space is large, and only samples from the marginals are available. We discuss applications of our approach and provide demonstrations to show the effectiveness of our algorithm.
\end{abstract}

\section{Introduction}

In reinforcement learning (RL), policy optimization seeks an optimal decision making strategy, known as a policy \cite{bertsekas1995dynamic, williams1992simple, sutton2000policy}. 
Policies are typically optimized in terms of accumulated rewards with or without constraints on actions and/or states associated with an environment \cite{altman1999constrained}. 


Policy optimization has many challenges; perhaps the most basic is the constraint on flow of state-action visitations called \textit{occupancy measures}. 
Indeed, formulating RL as a linear programming problem, occupancy measures appear as an explicit constraint on the optimal policy \cite{puterman2014markov}. 
The constraint-based formulation suggests the possibility of implementing a broader set of objectives and constraints, such as entropy regularization \cite{peters2010relative,neu2017unified,nachum2020reinforcement} and cost-constrained MDPs \cite{altman1999constrained}.

Considering the reward function as negative cost of assigning an action to a state, we view RL as a stochastic assignment problem. 
We formulate policy optimization as an unbalanced optimal transport on the space of occupancy measures. 
Where Optimal Transport (OT) \cite{villani2008optimal} is the problem of adapting two distributions on possibly different spaces via a cost function, 
unbalanced OT relaxes the marginal constraints on  OT to arbitrary measures through penalty functions \cite{Liero_2017,chizat2017scaling}. 

We therefore define distributionally-constrained reinforcement learning as a problem of optimal transport.
Given baseline marginals over states and actions, policy optimization is unbalanced optimal transport adapting the state marginal to action marginal via the reward function.  
Built upon mathematical tools of OT, we generalize the RL objective to the summation of a Bergman divergence and any number 
of arbitrary lower-semicontinuous convex functions. We optimize this objective with \textit{Dykstra's algorithm} 
\cite{dykstra1983algorithm} which is a method of iterative projections onto general 
closed convex constraint sets. Under Fenchel duality, this algorithm allows decomposition 
of the objective into Bregman projections on the subsets corresponding to each function. 

As particular case, we can regularize over the state space distribution and/or the global action execution distribution of the desired occupancy measures. This formulation allows constraints on the policy optimization problem in terms of distributions on state visitations and/or action executions.
We propose an actor-critic algorithm with function approximation for large scale RL, 
for when we have access to samples from a baseline policy (off-policy sampling or imitation learning) and  samples from the constraint marginals.

The structure of the paper is as follows: 
In the Section~\ref{sec:note} we briefly present the preliminaries on (unbalanced) optimal transport and policy optimization in reinforcement learning. In Section~\ref{sec:Bregman} we introduce a general objective with Bregman divergence for policy optimization and provide Dykstra iterations as a general primal algorithm for optimizing this objective. 
Section~\ref{sec:DCPO} discusses  distributionally constraint policy optimization with unbalanced OT and its applications. In this section, we also provide an actor critic algorithm for large scale RL. We conclude with demonstrations of the distributional constraints in Section~\ref{sec:demo} and  discussion on related works in Section~\ref{sec:discussion}.

\section{Notation and Preliminaries}\label{sec:note}
For any finite set $\mathcal X$, let $\prob(\mathcal X)$ be the set of probability distributions on $\mathcal X$.
We denote 
\[\delta_{\mathcal X}(x) =
     \begin{cases}
      0 &\quad\text{if } x \in \mathcal X\\
      \infty &\quad\text{otherwise }\\
     \end{cases},
\]
as the indicator function on set $\mathcal X$. For $p\in \prob(\mathcal X)$, we define the \textit{entropy} map $\ent(p) =\E_{x\sim p}[\log p(x)-1]$, 
and denote the \textit{Kullback-Leibler (KL) divergence} between two positive functions $p, q$ by 
\[\K (p|q) =\sum_{x\in \mathcal X} p(x) \log \left(\frac{p(x)}{q(x)}\right) -p(x) +q(x).\]
If $p,q \in \mathcal M (\mathcal X)$, for a given convex function $\psi~:~\mathcal X\rightarrow \R $ with $\psi(1)=0$, we define $\psi$-divergence:
\[D_\psi(p|q)=\sum_{x} \psi\left(\frac{p(x)}{q(x)}\right)q(x).\]
In particular, for  $\psi(x)~ =x\log(x)$, $D_\psi(p|q)=\K(p|q)$.

We also use $\langle\cdot,\cdot\rangle$ as the natural inner product on $\mathcal X$.
Through out the paper by $\bm{1}_{\mathcal X}$ we denote a vector with elements one over set $\mathcal X$ or just $\bm{1}$ if the context is clear.

\subsection{Optimal Transport}
Given measures $a \in \prob(\mathcal X)$, $b \in \prob(\mathcal Y)$ on two sets $\mathcal X$ and $\mathcal Y$, with a cost function $C:\mathcal X\times \mathcal Y\rightarrow \R$, Kantorovich \textit{Optimal Transportation} is the problem of finding stochastic optimal assignment plan $\mu \in \prob(\mathcal X \times \mathcal Y)$:
\[
\underset{\mu}{\min}\, \mathbb E_{\mu}[C(x,y)] +\delta_{\{a\}}\left (\mu \bm{1}_{\mathcal Y}\right) + \delta_{\{b\}}\left (\mu^T \bm{1}_{\mathcal X}\right).
\]
When $\mathcal X=\mathcal Y$  and $C$ is derived from a metric on $\mathcal X$, this optimization defines a distance function on measure space $\prob(\mathcal X)$, called Wasserstein distance \cite{villani2008optimal}. 

\textit{Unbalanced Optimal Transport} replaces hard constraints $\delta_{\{a\}}$ and $\delta_{\{b\}}$, with penalty functions
\[
\underset{\mu}{\min} \, \mathbb E_{\mu}[C(x,y)] +\epsilon_1 D_{\psi_1}(\mu\bm{1}|a) + \epsilon_2D_{\psi_2}(\mu^T\bm{1}|b),
\]
where $\epsilon_1, \epsilon_2$ are positive scalars. This formulation also extends optimal transport to measures of arbitrary mass. As $\epsilon_1,\epsilon_2 \rightarrow \infty$, the unbalanced OT approaches Kantorovich OT problem \cite{Liero_2017, chizat2017scaling}.

To speed up the daunting computational costs of standard algorithms, an entropy term $\ent(\mu)$ is usually added to the (U)OT objective to apply scaling algorithms \cite{cuturi2013sinkhorn, chizat2017scaling} and \cite{p2020}.
When $\mathcal X$ and $\mathcal Y$ are large or continuous spaces,  we usually have access to samples from  $a,b$ instead of the actual measures. Stochastic approaches usually add a relative entropy $\K(\mu\mid a\otimes b)$,  instead of $\ent(\mu)$ in order to take advantage of the Fenchel dual of the (U)OT optimization and estimate the objective from samples out of $a,b$ \cite{aude2016stochastic, seguy2018largescale}.

\subsection{Reinforcement Learning}
\label{RL}
Consider a discounted MDP $(\mathcal S, \mathcal A, P, r, \gamma,p_0)$, with finite state space $\mathcal S$, finite action space $\mathcal A$,  transition model $P:\mathcal S\times \mathcal A\rightarrow  \prob(\mathcal S) $,   initial distribution  $p_0 \in \prob(\mathcal S)$, deterministic reward function $r:\mathcal S\times \mathcal A\rightarrow \R$ and discount factor $\gamma \in [0,1)$. Letting $\Pi$ be the set of stationary policies on the MDP, for any policy $\pi:\mathcal S \rightarrow \prob(\mathcal A)$, we denote $P^\pi:\mathcal S\rightarrow \prob(\mathcal S)$ to be the induced Markov chain on $S$. In policy optimization, the objective is
\begin{equation} \label{eq:rl_policy}
  \underset{\pi \in \Pi}\max\, \sum_{s,a}\rho^\pi(s) \pi(a|s) r(s,a),  
\end{equation}
where $\rho^\pi(s)=(1-\gamma)\sum_{t=0}^\infty \gamma^t \text{Pr}(s_t=s|\pi)$ is the discounted stationary distribution of $P^{\pi}$. For a policy $\pi$, we define its \textit{occupancy measure}  $\mu^\pi \in \prob(\mathcal S\times \mathcal A)$, as $\mu^\pi(s,a) = \rho^\pi(s) \pi(a|s)$. Let $ \Delta:=\{\mu^\pi: \pi \in \Pi\}$ be the set of occupancy measures of $\Pi$, the following lemma bridges the two spaces $\Pi$ and $\Delta$:
\begin{lemma}\cite{syed2008apprenticeship}[Theorem 2, Lemma2]
\label{syed}
 \begin{itemize}
 \item[(i)]   $\Delta=\{\mu\in \prob(\mathcal S\times \mathcal A): \sum_a \mu(s,a)=(1-\gamma) p_0(s)+\gamma\sum_{s',a'} P(s|s',a') \mu(s',a')\;\,  \forall s\}$
 \item[(ii)] $\pi^\mu(a|s):=\mu(s,a)/\sum_a \mu(s,a)$ is a bijection from $\Delta$ to  $\Pi$.
 \end{itemize}
\end{lemma}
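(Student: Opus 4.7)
The plan is to prove (i) by two inclusions and then derive (ii) from (i) by constructing explicit two-sided inverses. For the inclusion $\Delta \subseteq \{\mu : \text{flow equation holds}\}$, I would start from $\mu = \mu^\pi$ and observe that $\sum_a \mu^\pi(s,a) = \rho^\pi(s)$. Applying the one-step Markov identity $\Pr(s_{t+1}=s\mid\pi) = \sum_{s',a'} P(s\mid s',a')\,\pi(a'\mid s')\,\Pr(s_t=s'\mid\pi)$ inside the series defining $\rho^\pi$, splitting off the $t=0$ term (which contributes $(1-\gamma)p_0(s)$), and reindexing the remaining sum produces exactly the right-hand side of the flow constraint.

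For the reverse inclusion, I take an arbitrary $\mu$ satisfying the flow equation, set $\pi := \pi^\mu$, and write $\bar\mu(s) := \sum_a \mu(s,a)$ so that $\mu(s,a) = \bar\mu(s)\,\pi(a\mid s)$. Substituting into the flow equation collapses it to the vector equation $\bar\mu = (1-\gamma)p_0 + \gamma(P^\pi)^{\!\top}\bar\mu$. Because $\gamma\in[0,1)$ and $P^\pi$ is row-stochastic, the operator $I-\gamma(P^\pi)^{\!\top}$ is invertible (its spectral radius is at most $\gamma<1$), so this linear system has a unique solution. By the forward inclusion already proved, $\rho^\pi$ satisfies the same equation, hence $\bar\mu = \rho^\pi$, and therefore $\mu(s,a) = \rho^\pi(s)\,\pi(a\mid s) = \mu^\pi(s,a)$. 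This simultaneously places $\mu$ in $\Delta$ and establishes $\mu = \mu^{\pi^\mu}$, giving a left inverse to $\mu \mapsto \pi^\mu$. Combined with the obvious identity $\pi^{\mu^\pi}(a\mid s) = \mu^\pi(s,a)/\rho^\pi(s) = \pi(a\mid s)$ at every state with $\rho^\pi(s) > 0$, this proves (ii).

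The main obstacle I foresee is purely bookkeeping: at states $s$ with $\bar\mu(s)=0$ the conditional $\pi^\mu(\cdot\mid s)$ is not determined by the ratio formula, and symmetrically two policies differing only on unvisited states induce the same occupancy measure. The cleanest resolution is to note that the flow equation forces $\bar\mu(s) \geq (1-\gamma)p_0(s)$, so if $p_0$ has full support there is nothing to handle; otherwise one either assigns $\pi^\mu(\cdot\mid s)$ arbitrarily at unvisited states (e.g.\ uniformly) or identifies policies that agree on visited states. Every other step is a short calculation, so the heart of the argument is the uniqueness of the solution to the Bellman flow linear system.
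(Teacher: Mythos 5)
Your argument is correct and is essentially the proof of the cited source (Syed et al., Theorem~2 and Lemma~2); the paper itself states this lemma without proof, deferring entirely to that reference. Both inclusions, the identification of the invertibility of $I-\gamma (P^{\pi})^{\top}$ (spectral radius at most $\gamma<1$) as the crux of uniqueness, and your handling of states with $\sum_a\mu(s,a)=0$ are all in order.
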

So, by multiplying $\pi=\pi^{\mu}$ to both sides of the equation in (i), one can obtain
$\Delta = \{\mu : \mu\geq 0 \,,\, A^\mu \mu =b^\mu\}$ where $ A^\mu =\mathbb I  - \gamma {P_{\pi}}^T$ and $b^\mu =(1-\gamma)\pi p_0$. 
In the rest of paper, we may drop the superscripts $\mu$ and $\pi$, when the context is clear.
Rewriting the policy optimization objective \eqref{eq:rl_policy} in terms of $\mu$, we get
\begin{equation}\label{eq:rl_measure}
\underset{\mu \in \Delta}{\max}\, \mathbb E_\mu[r] = \underset{\mu}{\max}\, \mathbb E_\mu[r] + \delta_{\{b^\mu\}}(A^\mu \mu).
\end{equation}
Entropy-regularized version of objective \eqref{eq:rl_measure}, relative to a given baseline $\mu' \in \Delta$, is also studied \cite{peters2010relative,neu2017unified}:
\begin{equation}
\label{KLRL_shanon}
\underset{\mu \in \Delta}{\max}\, \mathbb E_\mu[r]-\epsilon\, \text{KL}(\mu|\mu') =  \underset{\mu \in \Delta}{\min}\, \text{KL}\left(\mu|\mu'e^{r/\epsilon_1}\right),
\end{equation}
where $\epsilon$ is the regularization coefficient.

By lemma \ref{syed}, one can decompose the regularization term in \eqref{KLRL_shanon} as
\begin{equation}
\label{RLKL}
\text{KL}(\mu|\mu') 
=\text{KL}\left(\sum_a \mu|\sum_a \mu'\right)+\mathbb E_{\rho^\mu}\left[\text{KL}\left(\pi| \pi'\right)\right],
\end{equation}
 with the first term  penalizing the shift on state distributions and the second penalty is over average shift of policies for every  state. Since the goal is to optimize for the best policy, one might consider only regularizing relative to $\pi'$  as in \cite{schulman2015trust, neu2017unified}
 \begin{equation}
\label{KLRL_conditional}
\underset{\mu \in \Delta}{\max}\, \mathbb E_\mu[r]-\epsilon\, \mathbb E_{\rho^\mu}\left[\text{KL}(\pi | \pi')\right].
\end{equation}
One can also regularize objective \eqref{eq:rl_measure} by $\mathcal H(\mu)$ as
\begin{equation}
\label{ERL}
\underset{\mu \in \Delta}{\max}\, \mathbb E_\mu[r]-\epsilon \mathcal H(\mu) 
= \underset{\mu \in \Delta}{\min}\, \text{KL}(\mu|e^{r/\epsilon_1}),
\end{equation}
\noindent
which encourages exploration and avoids premature convergence \cite{haarnoja2017reinforcement,schulman2018equivalence,ahmed2019understanding}.

\section{A General RL Objective with Bregman Divergence}
\label{sec:Bregman}

In this section, we propose a general RL objective based on Bregman divergence and optimize it using Dykstra’s algorithm.

Let $\Gamma$ be a strictly convex, smooth function on $\text{relint}(\text{dom}(\Gamma))$, the relative interior of its domain,  with convex conjugate $\Gamma^*$. 
For any $(\mu,\xi) \in \text{dom}(\Gamma)\times \text{int}(\text{dom}(\Gamma))$, we define the Bregman divergence 
$$D_\Gamma(\mu|\xi) =\Gamma(\mu)-\Gamma(\xi) -\langle\nabla\Gamma(\xi),\mu -\xi\rangle.$$
Given $\xi$, we consider the optimization
\begin{equation}
\label{General_OT}
\underset{\mu}\min \, D_{\Gamma}(\mu|\xi)+\sum_{i}^N \phi_i(\mu),
\end{equation}
where $\phi_i'$s are proper, lower-semicontinuous convex functions satisfying
\begin{equation}
\label{relint_condition}    
\cap_i^N\text{relint}(\text{dom}(\phi_i))\cap \text{relint}(\text{dom}(\Gamma))\neq \emptyset.
\end{equation}
Let $\text{dom}(\Gamma)$  be the simplex on $\mathcal S \times \mathcal A$, regularized RL algorithms in Section \ref{RL} can be observed as instances of optimization \eqref{General_OT}:
\begin{itemize}
\item Given a baseline $\mu'$, setting $\Gamma =\mathcal H$, $\phi_1(\mu) = \delta_{\{b^\mu\}}(A^\mu \mu)$,  $\xi =\mu'e^{r/\epsilon}$, recovers objective \eqref{KLRL_shanon}. 
\item Similarly, as discussed in \cite{neu2017unified}$,  \Gamma(\mu) = \sum_{s,a}\mu(s,a) \log {\mu(s,a)}/{\sum_a \mu(s,a)}$, $\phi_1 =\mathbb E_\mu[r/\epsilon]$, $\phi_2(\mu) = \delta_{\{b^\mu\}}(A^\mu \mu)$,  and  $\xi =\mu'$ recovers objective \eqref{KLRL_conditional}.
\item Further,  $\Gamma =\mathcal H$,  $\phi_1(\mu) = \delta_{\{b^\mu\}}(A^\mu \mu)$, and $\xi =e^{r/\epsilon_1}$, entropy-regularizes the occupancy measure in objective \eqref{ERL}.
\end{itemize}





The motivation behind using Bregman divergence is to generalize the KL divergence regularization usually used in RL algorithms.
Moreover, one may replace the Bergman divergence term in \eqref{General_OT} with a $\psi$-Divergence and attempt deriving similar arguments for the rest of the paper. 

\subsection{Dykstra's Algorithm}
In this section, we use Dykstra's algorithm \cite{dykstra1983algorithm} optimize objective \eqref{General_OT}. 
Dykstra is a method of iterative projections onto general closed convex constraint sets, 
which is well suited because the occupancy measure constraint is on a compact convex polytope $\Delta$. 

Defining the Proximal map of a convex function $\phi$, with respect to $D_\Gamma$, as 
\[
\text{Prox}_{\phi}^{D_\Gamma}(\mu) = \arg \underset{\tilde \mu}\min\, D_\Gamma(\tilde \mu|\mu)+\phi(\tilde \mu),
\]
for any $\mu \in \text{dom}(\Gamma)$, we present the following proposition which is the generalization of Dykstra algorithm in \cite{peyre2015entropic}:

\begin{proposition}[Dykstra's algorithm] \footnote{All proofs and derivations in this section are included in Appendix~\ref{apd:sec3}.}
\label{prop:Dykestra}
For iteration $l>0$,
\begin{equation}
\label{Dykestra}    
\begin{aligned}
&\mu^{(l)} =\text{Prox}_{\phi_{[l]_N}}^{D_\Gamma}\left( \nabla\Gamma^*\left(\nabla\Gamma(\mu^{(l-1)}) \right)+\nu^{(l-N)}\right)\\
&\nu^{(l)} =\nu^{(l-N)}+\nabla \Gamma(\mu^{(l-1)})-\nabla \Gamma(\mu^{(l)}),
\end{aligned}
\end{equation}
with 
\[
[l]_N =
     \begin{cases}
      N &\quad\text{if  } l\,\text{mod}\, N=0\\
      l\,\text{mod}\, N  &\quad\text{otherwise }\\
     \end{cases},
\]
converges to the solution of optimization \eqref{General_OT}, with $\mu^{(0)}=\xi$ and $\nu^{(i)}=\bm{0}$ for $-N+1\leq i \leq0$.
\end{proposition}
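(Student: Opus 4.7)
The plan is to identify the updates \eqref{Dykestra} as cyclic block coordinate ascent on the Fenchel dual of \eqref{General_OT}. First I would dualize each $\phi_i$ via $\phi_i(\mu)=\sup_{\nu_i}\langle\nu_i,\mu\rangle-\phi_i^*(\nu_i)$, swap $\min_\mu$ with $\max_{\nu_1,\ldots,\nu_N}$ (justified by Sion's minimax theorem; the qualification \eqref{relint_condition} supplies the required relative-interior regularity for strong duality via Fenchel--Rockafellar), and solve the inner unconstrained minimization in $\mu$ in closed form. Because $\Gamma$ is strictly convex and smooth on the relative interior of its domain, the first-order condition $\nabla\Gamma(\mu)=\nabla\Gamma(\xi)-\sum_i\nu_i$ has the unique solution $\mu=\nabla\Gamma^*\bigl(\nabla\Gamma(\xi)-\sum_i\nu_i\bigr)$. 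Substituting yields a concave dual in $(\nu_1,\ldots,\nu_N)$ whose smooth component depends on these variables only through their sum, while $-\sum_i\phi_i^*(\nu_i)$ is separable in the blocks.

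Second, I would show that a single block update, optimizing over $\nu_{[l]_N}$ with all other $\nu_j$ frozen at their most recent values, reproduces the pair in \eqref{Dykestra}. The bookkeeping device is the primal--dual invariant
\begin{equation*}
\nabla\Gamma(\mu^{(l)})=\nabla\Gamma(\xi)-\sum_{i=1}^N \nu_i^{(\text{current})},
\end{equation*}
which should hold after every step. Solving the block's optimality condition $\mu\in\partial\phi_{[l]_N}^*(\nu_{[l]_N})$ together with this invariant gives
\begin{equation*}
\mu^{(l)}=\arg\min_{\tilde\mu}\; D_\Gamma\bigl(\tilde\mu\,\big|\,\nabla\Gamma^*(\nabla\Gamma(\mu^{(l-1)})+\nu^{(l-N)})\bigr)+\phi_{[l]_N}(\tilde\mu),
\end{equation*}
which is precisely the Bregman proximal map in \eqref{Dykestra}. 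The $\nu$-update is then forced by comparing the invariant before and after the step: since block $i$ is revisited only once every $N$ iterations, the ``previous'' value of $\nu_{[l]_N}$ is exactly $\nu^{(l-N)}$, matching the lag in the paper's indexing, and the difference $\nabla\Gamma(\mu^{(l-1)})-\nabla\Gamma(\mu^{(l)})$ is absorbed into that block.

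Third, convergence of the dual iterates to a maximizer follows from standard results on cyclic block coordinate ascent for concave composite objectives whose smooth part is Bregman-concave and whose nonsmooth part is block-separable, lower-semicontinuous and proper (for example Tseng-type results, or their Bregman adaptation in Bauschke--Combettes). Transporting convergence to the primal through the continuous map $\mu=\nabla\Gamma^*\bigl(\nabla\Gamma(\xi)-\sum_i\nu_i\bigr)$ yields $\mu^{(l)}\to\mu^\star$, where uniqueness of $\mu^\star$ comes from strict convexity of $D_\Gamma(\cdot|\xi)$ on the relative interior of $\mathrm{dom}(\Gamma)$.

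The step I expect to be the main obstacle is the algebraic bookkeeping in the second paragraph: exhibiting an exact correspondence between the seemingly composite primal update in \eqref{Dykestra} and a clean single-block dual maximization, and verifying that the initialization $\mu^{(0)}=\xi$, $\nu^{(i)}=\bm{0}$ for $-N+1\le i\le 0$ makes the invariant hold from the start. A more standard but non-trivial point is checking the hypotheses of the minimax exchange and of the block coordinate convergence theorem under the mild assumption \eqref{relint_condition}, which is weaker than the closed-convex-set hypotheses of the original Dykstra setting and therefore requires the $\phi_i$'s to be handled as general proper lower-semicontinuous convex functions rather than indicators.
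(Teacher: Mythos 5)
Your proposal is correct and follows essentially the same route as the paper's proof in Appendix~\ref{apd:sec3}: pass to the Fenchel dual $\max_{u_1,\dots,u_N} -\sum_i\phi_i^*(u_i)-\Gamma^*\bigl(\nabla\Gamma(\xi)-\sum_i u_i\bigr)$, run cyclic coordinate ascent over the blocks, identify each block update with the Bregman proximal map via the primal--dual relation $\mu=\nabla\Gamma^*\bigl(\nabla\Gamma(\xi)-\sum_i u_i\bigr)$ (your ``invariant,'' up to the sign convention $\nu=-u$), and read off the lag-$N$ indexing from the fact that each block is revisited once per cycle. The paper likewise defers the convergence of the coordinate scheme to the standard theory (as a generalization of the entropic case in Peyr\'e's Dykstra iterations), so your deferral to Tseng-type block-coordinate results is in the same spirit.
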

Intuitively, at step $l$,  algorithm \eqref{Dykestra} projects $\mu^{(l-1)}$ into the convex constraint set corresponding to the function $\phi_{[l]_N}$.

\begin{corollary} \label{cor:dykstra_kl}

Taking $\Gamma = \mathcal H$, the iteration \eqref{Dykestra} is
\begin{equation}\label{dykstra_kl}
\begin{aligned}
&\mu^{(l)} =\text{Prox}_{\phi_{[l]_N}}^{\text{KL}}\left( \mu^{(l-1)}\odot z^{(l-N)}\right)\\
&z^{(l)} =z^{(l-N)}\odot \frac{\mu^{(l-1)}}{\mu^{(l)}},
\end{aligned}
\end{equation}
where  $\odot, \frac{\cdot}{\cdot} $ are the element-wise  product and division, $\mu^{(0)}=\xi$ and $z^{(i)} =\bm{1}$, for  $-N+1\leq i \leq0$. 
\end{corollary}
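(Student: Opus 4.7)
The plan is to derive the corollary as a direct specialization of the general Dykstra iteration in Proposition~\ref{prop:Dykestra}, by substituting $\Gamma = \mathcal{H}$ and transforming the dual variables $\nu^{(l)}$ into multiplicative variables $z^{(l)} = \exp(\nu^{(l)})$ element-wise. Since the negative entropy generates the KL divergence as its Bregman divergence, the operations of addition in the ``log-domain'' become element-wise multiplication in the ``primal domain,'' and this is precisely what the corollary encodes.

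The first step is to collect the three Legendre-type identities needed for $\Gamma = \mathcal{H}$. Because $\mathcal{H}(\mu) = \sum_{x} \mu(x)(\log \mu(x) - 1)$ on the positive orthant, we have $\nabla \Gamma(\mu) = \log \mu$ (entrywise), the conjugate is $\Gamma^*(y) = \sum_{x} e^{y(x)}$ with $\nabla \Gamma^*(y) = e^{y}$, and the Bregman divergence reduces to $D_{\mathcal{H}}(\mu \mid \xi) = \mathrm{KL}(\mu \mid \xi)$ after canceling the linear terms. In particular, $\mathrm{Prox}^{D_\Gamma}_{\phi} = \mathrm{Prox}^{\mathrm{KL}}_{\phi}$ on the positive orthant, matching the notation of the corollary.

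Next I would substitute these identities into the two lines of \eqref{Dykestra}. Reading the first line so that the offset $\nu^{(l-N)}$ enters inside the $\nabla\Gamma^*$ application (which is the form that produces a nontrivial specialization), the inner argument becomes
\[
\nabla\Gamma^*\bigl(\nabla\Gamma(\mu^{(l-1)}) + \nu^{(l-N)}\bigr)
= \exp\!\bigl(\log \mu^{(l-1)} + \nu^{(l-N)}\bigr)
= \mu^{(l-1)} \odot e^{\nu^{(l-N)}}.
\]
Defining $z^{(l)} := e^{\nu^{(l)}}$ entrywise turns this into $\mu^{(l-1)} \odot z^{(l-N)}$, which is exactly the first update in \eqref{dykstra_kl}. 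For the dual update, exponentiating both sides of $\nu^{(l)} = \nu^{(l-N)} + \log \mu^{(l-1)} - \log \mu^{(l)}$ and using $e^{a+b} = e^a \odot e^b$ yields $z^{(l)} = z^{(l-N)} \odot (\mu^{(l-1)}/\mu^{(l)})$, which matches the second update. The initial conditions transfer immediately since $\nu^{(i)} = \bm{0}$ corresponds to $z^{(i)} = \bm{1}$, and $\mu^{(0)} = \xi$ is unchanged.

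I do not expect a significant obstacle here: the argument is an element-wise Legendre-transform calculation, and convergence is inherited from Proposition~\ref{prop:Dykestra}. The only point that deserves care is verifying that all iterates stay in the positive orthant so that $\log \mu^{(l)}$ and $\mu^{(l-1)}/\mu^{(l)}$ are well defined; this follows from the assumption that $\xi$ lies in $\mathrm{int}(\mathrm{dom}(\Gamma))$ together with the fact that each KL-proximal update of a proper lower-semicontinuous convex $\phi_i$ preserves positivity, so the condition \eqref{relint_condition} is what keeps the element-wise formulation well posed throughout the iteration.
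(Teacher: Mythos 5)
Your proposal is correct and follows essentially the same route as the paper's own (one-line) proof: substitute $\nabla\Gamma=\log$, $\nabla\Gamma^*=\exp$, $D_\Gamma=\mathrm{KL}$ for $\Gamma=\mathcal H$ and change variables $z^{(l)}=\exp(\nu^{(l)})$, so that additive updates in the log-domain become element-wise multiplicative ones. Your reading of the first line of \eqref{Dykestra} with the offset $\nu^{(l-N)}$ inside $\nabla\Gamma^*$ is the intended one (it matches the derivation in the appendix proof of Proposition~\ref{prop:Dykestra}), and your remark on positivity of the iterates is a reasonable extra check that the paper leaves implicit.
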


\begin{note}
Given probability measures $a,b$, for $\Gamma =\ent$, $N=2$, $\phi_1(\mu) =\delta_{\{a\}}(\mu \bm{1})$, $\phi_2(\mu)=\delta_{\{b\}}(\mu^T \bm{1})$, 
optimization \eqref{General_OT} is entropic regularized optimal transport problem and algorithm \ref{dykstra_kl},
is the well known Sinkhorn-Knopp algorithm \cite{cuturi2013sinkhorn}. Similarly one can  apply \eqref{dykstra_kl} to solve the regularized UOT problem \cite{chizat2017scaling,chizat2019unbalanced}.
\end{note}


As  aforementioned  RL objectives in Section \ref{RL} can be viewed as instances of optimization \eqref{General_OT}, Dykstra's algorithm  can be used to optimize them. 
In particular, as the constraint $\phi_N(\mu) =\delta_{\{b^\mu\}}(A^\mu \mu)$ occurs in all of them,  each iteration of Dykstra requires 
\begin{equation}
\label{bregman_proj}
\text{Prox}_{\phi_N}^{D_\Gamma}(\mu) =\arg \underset{\tilde \mu \in \Delta}\min\, D_\Gamma(\tilde \mu|\mu),
\end{equation}
which is the Bregman projection of $\mu$ onto the space of occupancy measures $\Delta$.

Although $\mu$ (the measure from the previous step of Dykstra) does not necessarily lie inside $\Delta$,  step \eqref{bregman_proj} of Dykstra could  be seen as a Bregman divergence policy optimization  resulting in dual formulation over value functions (See details of dual optimization in Appendix~\ref{apd:sec4}). This dual formulation is similar to REPS algorithm \cite{peters2010relative}.  

In the next section we apply Dykstra to solve  unbalanced optimal transport on $\Delta$.
\section{Distributionally-Constrained Policy Optimization}
\label{sec:DCPO}

 
A natural constraint in policy optimization is to enforce a global action execution allotment and/or state visitation frequency. 
In particular, given a positive baseline  measure $\eta'$, with $\eta'(a)$ being a rough execution allotment of action $a$ over whole state space, for every $a\in \mathcal A$, 
we can consider $ D_{\psi_1}\left (\E_{\rho^\pi}[\pi]\mid \eta'\right)$ as a global penalty constraint of policy $\pi$ under its natural state distribution $\rho^\pi$.
Similarly, the penalty $D_{\psi_2}\left (\rho^\pi \mid \rho'\right)$  enforces a cautious or exploratory constraint 
on the policy behavior by avoiding or encouraging visitation of some states according to a given positive baseline measure $\rho'$ on~$\mathcal S$. 

So, given baseline measures $\rho'$ on $\mathcal S$ and $\eta'$ on $\mathcal A$, 
we define the distributionally-constrained policy optimization objective
\begin{equation}
\label{DCRL}
 \underset{\mu \in \Delta}\max
 \mathbb E_{\mu}[r] -\epsilon_1 D_{\psi_1}\left(\mu\bm{1}\mid \rho'\right)
-\epsilon_2 D_{\psi_2}\left(\mu^T\bm{1} \mid  \eta'\right).
\end{equation}


When $D_{\psi_1}=D_{\psi_2}=\K$, objective~\eqref{DCRL} looks similar to \eqref{KLRL_shanon} (considering expansion in \eqref{RLKL}), but they are different.
Because in \eqref{DCRL}, if $\rho' = \mu'\bm{1}$ and $\eta' =\mu'^T\bm{1}$, for some baseline $\mu' \in \Delta$, then the third term is $\K\left (\E_{\rho^\pi}[\pi]\mid \E_{\rho^{\pi'}}[\pi']\right)$ which is a global constraint on center of mass of $\pi$ over the whole state space, 
whereas $\E_{\rho^{\pi}}[\K(\pi\mid \pi')]$ in \eqref{RLKL} is a stronger constraint on closeness of policies on every single state.
The bottom line is that \eqref{DCRL} generally constrains the projected marginals of $\mu$ over $\mathcal S$ and $\mathcal A$, and \eqref{KLRL_shanon} constrains  $\mu$ element wise.

For regularization purposes in iterative policy optimization algorithm (e.g., using mirror descent), one natural choice of the state and action marginals is to take  $\rho'=\sum_a\mu_{k-1}$, $\eta'=\sum_s\mu_{k-1}$ at the $k$'th iteration. In the Appendix~\ref{apd:sec4}, we discuss the policy improvement and convergence of such an algorithm.
Another source for the marginals $\rho',\eta'$ is the empirical visitation of states and actions sampled out of an expert policy in imitation learning.

Formulation of Objective \eqref{DCRL} is in form of  unbalanced optimal transport on the space of occupancy measures.
So, for applying Dykstra algorithm, we can add an entropy term $\epsilon \ent(\mu)$ to transform \eqref{DCRL} into
\begin{align}
\label{DCRL2}
 \underset{\mu \in \Delta }\max -\K(\mu \mid \xi) - &\epsilon_1 D_{\psi_1}\left(\mu\bm{1}\mid  \rho'\right) 
-\epsilon_2 D_{\psi_2}\left(\mu^T\bm{1}\mid \eta'\right),
\end{align}
which means setting $N=3$, $\phi_1=\epsilon_1 D_{\psi_1}, \phi_2=\epsilon_2 D_{\psi_2}$, 
$\phi_3(\mu) = \delta_{b^{\mu}}(A^\mu \mu)$, $\xi = \exp(r/\epsilon)$ in Objective \eqref{General_OT} \footnote{Coefficients $\epsilon_1$  and $\epsilon_2$ in equation \eqref{DCRL2} are different from those in equation \eqref{DCRL}.}. Hence, the algorithm \eqref{dykstra_kl} 
can be applied with following proximal functions:
\begin{align}
  &\text{Prox}_{\phi_1}^{\text{KL}} (\mu)= \text{diag}\left( \frac{\text{Prox}_{\epsilon_1 D_{\psi_1}}^{\text{KL}}(\mu\bm{1})}{\mu\bm{1}}\right)\mu, \label{prox1}\\ 
  &\text{Prox}_{\phi_2}^{\text{KL}} (\mu)=\mu\, \text{diag}\left( \frac{\text{Prox}_{\epsilon_2 D_{\psi_2}}^{\text{KL}}(\mu^T\bm{1})}{\mu^T\bm{1}}\right),\label{prox2}\\
&\text{Prox}_{\phi_3}^{\text{KL}} (\mu)=\arg \underset{\tilde \mu \in \Delta}\min\, \text{KL}(\tilde \mu|\mu). \label{prox3}
\end{align} 

In general, for appropriate choices of $\phi_1$ and $\phi_2$ (e.g., $D_{\psi_1}=D_{\psi_2}=\K$)   the proximal operators in \eqref{prox1} and \eqref{prox2}  have closed form solutions. However, as discussed in the previous section, $\phi_3$ in \eqref{prox3} has no closed form solution\footnote{See detailed derivation for proximal operators in Appendix~\ref{apd:sec4}.}.
We can also consider other functions for $\phi_1, \phi_2$ in this scenario, for example, setting $\phi_2(\mu)=\delta_{\{\eta'\}}(\mu^T\bm{1})$, changes the problem into finding a policy $\pi$ which globally matches the distribution $\eta'$ under its natural state distribution $\rho^\pi$, i.e., $\mathbb E_{s \sim \rho^\pi}[\pi(a|s)] =\eta(a)$ for any $a \in \mathcal A$\footnote{As a  constraint, $\pi \rho^\pi$ would be a feasible solution, when  $\pi(a|s)=\eta'(a)$.}.

\noindent
In the next section we propose an actor-critic algorithm for large scale reinforcement learning.
\subsection{Large Scale RL}
\label{AC}
When $\mathcal S, \mathcal A$ are large, policy optimization via Dykstra is challenging because tabular updating of $\mu(s,a)$ is time consuming or sometimes even impossible. In addition, it requires  knowledge of reward function $r$ for the initial distribution $\xi$ and transition model $P$ for projection onto $\Delta$ in \eqref{prox3}. Model estimation usually requires  large number of state-action samples. Also, we might only have off-policy samples or in imitation learning scenarios,  only access the marginals $\rho',\eta'$ through observed samples by the expert.  In this section, we derive an off-policy optimization with function approximation to address these problems.

Replacing the last three terms of objective in \eqref{DCRL} with their convex conjugates by dual variables $u, v, Q$, we get\footnote{Appendix~\ref{apd:sec4_1} provides  derivations for all formulations in this section.}
\begin{align}
\label{ac1}
\max_\mu \min_{u,v,Q} \mathbb E_{\mu}\left[r-A^{{\mu}*} Q -\epsilon_1 u\bm{1_A}^T-\epsilon_2 \bm{1_S}v^T\right]
+\mathbb E_{b^\mu}[Q(s,a)] +\epsilon_1\mathbb E_{\rho'}[\psi_1^*({u(s))}]
+\epsilon_2 \mathbb E_{\eta'}[\psi_2^*(v(a))],
\end{align}
where ${A^\mu}^*$ is the convex conjugate (transpose) of $A^\mu$. 

Having access to samples from a baseline $\mu' \in \Delta$, both helps regularize the objective \eqref{ac1} into an easier problem to solve, and allows off-policy policy optimization or imitation learning \cite{nachum2020reinforcement}. 
Yet, by the discussion in Section~\ref{sec:DCPO}, regularizing with the term $D_{\psi}(\mu| \mu')$ in \eqref{ac1} might make marginal penalties redundant, in particular when $\rho' = \sum_a\mu'$ and $\eta' = \sum_s\mu'$. In the next two subsections we propose different approaches for each of these cases.

\subsubsection{$\rho' \neq \sum_a\mu'$ \textbf{or} $\eta' \neq \sum_s\mu'$}
Without the loss of generality, assume marginals $\rho' \neq \sum_a\mu'$ and $\eta' \neq \sum_s\mu'$.
In this case, regularizing \eqref{ac1} with $ D_{\psi}(\mu| \mu')$ and under Fenchel duality, we get the off-policy optimization objective
\begin{equation}
\label{ac2_pe}
\begin{split}
\underset{\pi}{\max}\min_{u,v,Q} &\mathbb E_{\mu'}\left[\psi^*\left(r+\gamma P^\pi Q-Q -\epsilon_1 u-\epsilon_2 v\right)(s,a)\right]
+(1-\gamma)\mathbb E_{\pi,p_0}[Q(s,a)] \\&+\epsilon_1\mathbb E_{\rho'}[\psi_1^*({u(s))}]
+\epsilon_2 \mathbb E_{\eta'}[\psi_2^*(v(a))],
\end{split}
\end{equation}
where $u(s,a):=u(s)$ and $v(s,a):=v(a)$. 
Now, the first term is based on expectation of baseline $\mu'$ and can be estimated from  off-policy samples.

In a special case, when $D_{\psi_1}=D_{\psi_2}=\K$ in objective \eqref{DCRL} and we take $D_{\psi}=\K$ as well, similar derivations yield
\begin{equation}
\label{ac3_pe}
\begin{split}
 \underset{\pi}{\max}\min_{u,v,Q}\mathcal  L(\pi,u,v,,Q):& =\mathbb \log \mathbb E_{\mu'}\left[\exp\left(r+\gamma P^\pi Q-Q -\epsilon_1 u-\epsilon_2 v\right)(s,a)\right]
+(1-\gamma)\mathbb E_{p_0,\pi}[Q(s,a)] \\&+\epsilon_1\mathbb \log \mathbb E_{\rho'}[\exp({u(s))}]
+\epsilon_2 \log \mathbb E_{\eta'}[\exp(v(a))].
\end{split}
\end{equation}

Now, gradients of $\mathcal L$ can now be computed. 
Letting $h^{\pi}_{u,v,Q}(s,a) := r(s,a)+\gamma P^\pi Q(s,a)-Q(s,a)-\epsilon_1 u(s)-\epsilon_2 v(a)$ and defining $\mathcal {F}_{p}\circ h(z) :=\mathcal {F}_{p}(h)(z) = \exp(h(z))/\mathbb E_{\tilde z\sim p}[\exp(h(\tilde z))]$ to be the softmax operator for any $h, z, p$, we have gradients of $\mathcal L$ with respect to $u,v,Q$:
\begin{equation}\label{20}
\begin{split}
\nabla_{u}\mathcal L=
-\epsilon_1 \mathbb E_{(s,a)\sim \mu'}[\mathcal {F}_{\mu'} \circ h^{\pi}_{u,v,Q}(s,a)\nabla u(s) ]
+\epsilon_1\mathbb E_{s\sim\rho'}[\mathcal F_{\rho'}\circ u(s)\nabla u(s)],
\end{split}
\end{equation}
\begin{equation}\label{21}
\begin{split}
\nabla_{v}\mathcal L=
-\epsilon_2 \mathbb E_{(s,a)\sim \mu'}[\mathcal {F}_{\mu'} \circ h^{\pi}_{u,v,Q}(s,a)\nabla v(a) ]
+\epsilon_2\mathbb E_{a\sim \eta'}[\mathcal F_{\eta'}\circ v(a)\nabla v(a)],
\end{split}
\end{equation}
\begin{equation}\label{22}
\begin{split}
\nabla_{Q}\mathcal L= \mathbb E_{(s,a)\sim \mu'}[\mathcal {F}_{\mu'}  \circ h^{\pi}_{u,v,Q}(s,a)
\left(\gamma P^{\pi} \nabla Q -\nabla Q \right){(s,a)}]
+(1-\gamma)\mathbb E_{\underset{a\sim\pi(\cdot|s)}{s\sim {p_0}}}[\nabla Q(s,a)].
\end{split}
\end{equation}
The gradient with respect to policy $\pi$ is
\begin{equation}\label{23}
\begin{split}
\nabla_{\pi}\mathcal L &=\gamma \mathbb E_{\underset{(s,a,s')\sim \mu'}{a'\sim\pi(\cdot|s')}}[\mathcal  {F}_{\mu'} \circ h^{\pi}_{u,v,Q}(s,a) Q(s',a')\nabla \log\pi(a'\mid s')]
+(1-\gamma)\mathbb E_{\underset{a\sim\pi(\cdot|s)}{s\sim {p_0}}}[Q(s,a)\nabla \log\pi(a\mid s)],
\end{split}
\end{equation}
where $u,v,Q, \pi$ can be approximated by some functions and one can apply gradient ascent on $\pi$ and gradient descent on $u,v,Q$.

\subsubsection{$\rho' = \sum_a\mu'$ \textbf{and} $\eta' = \sum_s\mu'$}
Following the approach in \cite{sutton2016emphatic,nachum2020reinforcement}, assuming $\pi$ is known, with the change of variable $\zeta(s,a):=\frac{\mu}{\mu'}(s,a)$,  we can rewrite  \eqref{ac1}  with importance sampling weights as a policy evaluation problem
\begin{equation}
\label{ac33}
\begin{split}
 \min_{u,v,Q} \max_\zeta \mathcal L(u,v,Q,\zeta;\pi):&=
 \mathbb E_{\mu'}\left[\zeta(s,a)\left(r+\gamma P^\pi Q -Q -\epsilon_1u-\epsilon_2v\right)(s,a)\right] +(1-\gamma)\mathbb E_{p_0,\pi}[Q(s,a)]\\&+\epsilon_1 \mathbb E_{\rho'}[\psi_1^*({u(s))}] +\epsilon_2 \mathbb E_{\eta'}[\psi_2^*(v(a))].
\end{split}
\end{equation}
The gradients with respect to $u,v,Q, \zeta$ are as follows:
\begin{equation}\label{25}
\begin{split}
\nabla_u \mathcal L(u,v,Q,\zeta;\pi)=&-\epsilon_1 \mathbb E_{\mu'}\left[\zeta(s,a)\nabla u(s)\right] 
+\epsilon_1 \mathbb E_{\rho'}[\nabla_u \psi_1^*({u(s))}],
\end{split}
\end{equation}
\begin{equation}\label{26}
\begin{split}
\nabla_v \mathcal L(u,v,Q,\zeta;\pi)=&-\epsilon_2 \mathbb E_{\mu'}\left[\zeta(s,a)\nabla v(a)\right]
+\epsilon_2 \mathbb E_{\eta'}[\nabla_v \psi_2^*({v(a))}],
\end{split}
\end{equation}
\begin{equation}\label{27}
\begin{split}
 \nabla_Q \mathcal L(u,v,Q,\zeta;\pi)= \mathbb E_{\mu'}\left[\left(\zeta+\gamma P^\pi \nabla Q -\nabla Q\right)(s,a) \right]+(1-\gamma)\mathbb E_{p_0,\pi}[\nabla Q(s,a)],
\end{split}
\end{equation}
\begin{equation}\label{28}
\begin{split}
 \nabla_{\zeta} \mathcal L(u,v,Q,\zeta;\pi)=
 \mathbb E_{\mu'}\left[\nabla\zeta(s,a)h^{\pi}_{u,v,Q}(s,a)\right].
\end{split}
\end{equation}
 Wrapping $\max_{\pi}$ around \eqref{ac33} gives the off-policy optimization. Given optimized $Q,\zeta$, the gradient with respect to $\pi$ is
\begin{equation}\label{29}
\begin{split}
 &\nabla_{\pi} \mathcal{L}(u,v,Q,\zeta,\pi)=\mathbb E_{(s,a)\sim \mu'}\left[\zeta(s,a)Q(s,a)\nabla\log\pi(a|s)\right] 
\end{split}
\end{equation}
\section{Demonstrations}
\label{sec:demo}
In this section, we demonstrate the effectiveness of distributionally-constrained policy optimization with Dykstra. The purpose of our experiments is to answer  how  distributional penalization on  $\rho'$ and $\eta'$ affect the behavior of the policy and study the Dykstra's empirical rate of convergence .

We look into these questions on a grid world with its optimal policy out of \eqref{ERL} shown in Fig.~\ref{fig:fig_gw}.
Due to the simplicity, this environment is suitable for studying the effect of  distributional constraints on the policy behavior.
For the sake of focusing on the effects of distributional constraints, we set the coefficient of the entropy term fairly low ($\epsilon=.01$) in optimizations \eqref{ERL} and \eqref{DCRL2}.~\footnote{Appendix~\ref{apd:demo} provides the numerical settings in implementation of Dykstra.}

\begin{figure}[h!]
\centering
  \includegraphics[scale=0.55]{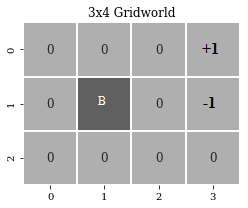}
  \hspace{0.4in}
  \includegraphics[scale=0.25]{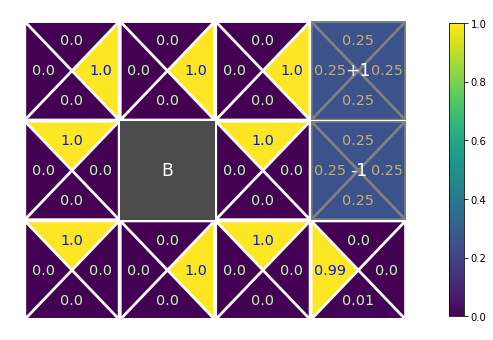}
  \caption{\textbf{Left}: The grid world MDP. The reward entering each state is shown on the grid. There is a block at state $(1,1)$ and $\mathcal A=\{\text{up, down, left, right}\}$.  Every action succeeds to its intended effect with probability $0.8$ and fails to each perpendicular direction with probability $0.1$. Hitting the walls means stay and $\gamma=.95$.  An episode starts in $(2,0)$ (bottom left corner) and any action in $(0,3)$ or $(1,3)$ transitions into $(2,0)$.  \textbf{Right}: Optimal Policy out of optimization~\eqref{ERL} for $\epsilon=.01$.}
  \label{fig:fig_gw}
\end{figure}

\begin{figure*}[h!]
      \centering
    \textbf{(1)}\includegraphics[scale=0.2]{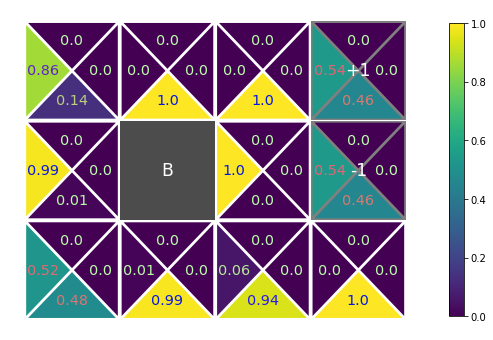}
    \textbf{(2)}\includegraphics[scale=0.2]{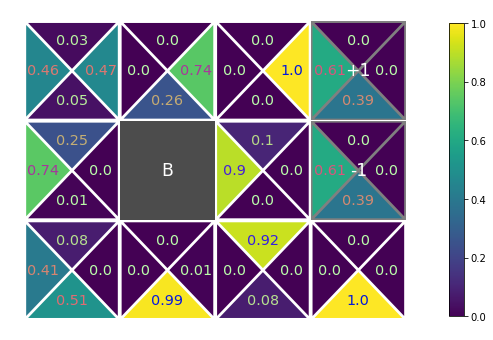}
    \textbf{(3)}\includegraphics[scale=0.2]{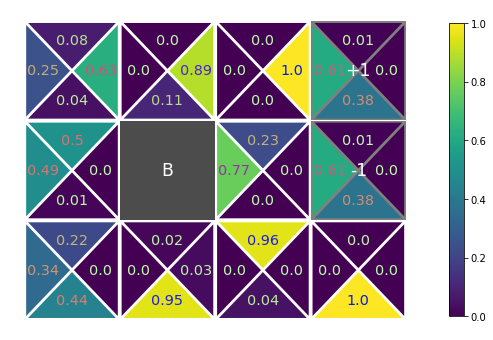}
    \textbf{(4)}\includegraphics[scale=0.2]{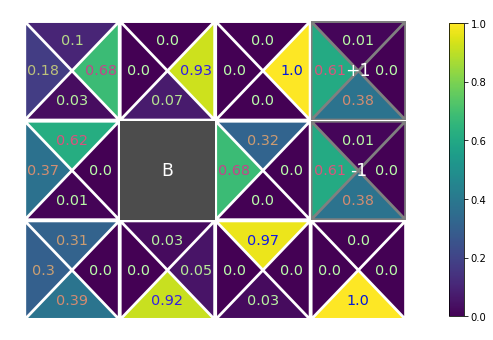}
    \textbf{(5)}\includegraphics[scale=0.2]{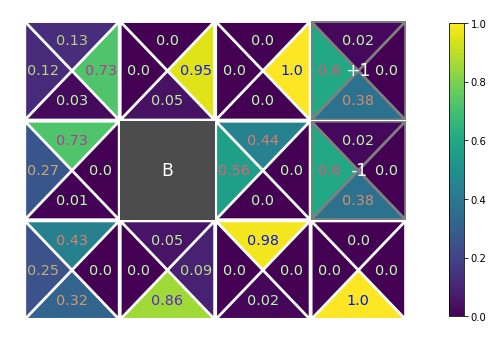}
    \textbf{(6)}\includegraphics[scale=0.2]{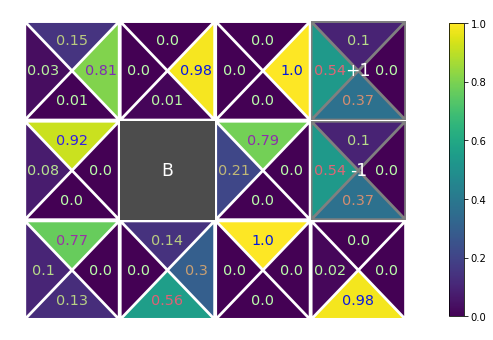}
    \textbf{(7)}\includegraphics[scale=0.2]{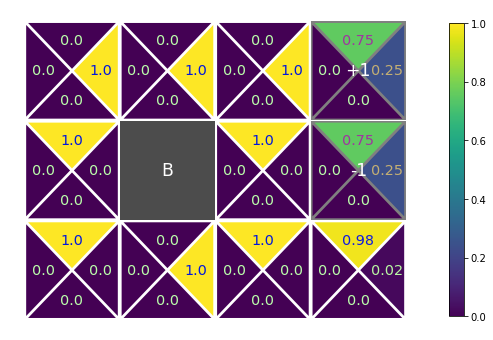}
    \textbf{(8)}\includegraphics[scale=0.18]{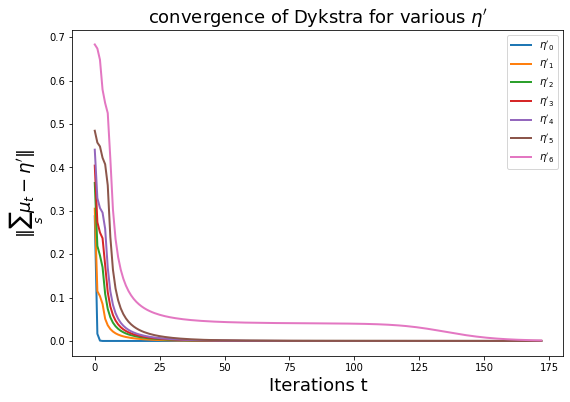}
    \caption{\textbf{(1-7)}: Optimal policy out of Dykstra, with  $\eta_i'=[\alpha,.5-\alpha,.5-\alpha,\alpha]$ for $\alpha_i \in\{e^{-10}, .1,.2,.25,.3,.4,.5-e^{-10}\}$ for $i\in \{1, \dots, 7\}$. 
    \textbf{(8)}:~corresponding convergence of the Dykstra in (1-7), horizontal axis is the number of iterations $t$ and the vertical axis is $\|\sum_s \mu_{t}-\eta'_i\|$.} \label{eta0550}
\end{figure*}

We first observe the independent effect of $\eta'$ on the policy, by setting $\epsilon_1=0$. We use, $\delta_{\eta'}(\mu^T\bm1)$ instead of $D_{\psi_1}(\mu^T\bm{1}\mid \eta')$ as an extreme case when $\epsilon_2\rightarrow \infty$ to focus on the role $\eta'$.
Figure~\ref{eta0550}(1-7), shows differences among policies when the marginal on actions shifts from a distribution where only 
equiprobable actions \textit{down} and \textit{left}  are allowed ($\eta'_1 = (0, 0.5,0.5,0)$)
towards the case where only \textit{up} and \textit{right} are permitted with equal probability ($\eta'_7 = ( 0.5,0,0,0.5)$).
~\footnote{The optimal policies in this section aren't necessarily deterministic (even though $\epsilon$ is set to be very small), because of the constraint $\delta_{\eta'}(\mu\bm(a))$. In general, the policies out of \eqref{DCRL2} are not necessarily deterministic either because of the nonlinear objective. }

In Figure~\ref{eta0550}(1), under $\eta'_1$, \textit{down} is the optimal action in state $(0,2)$ because, this is the only way to get $+1$ reward (with luck). In \ref{eta0550}(2), which changes to a $.1$ probability on \textit{right}, the policy eliminates the reliance on change by switching state $(0,2)$ to right.

Note that the optimal policy in Figure~\ref{fig:fig_gw}(left) does not include a \textit{down} move. 
When \textit{down} is forced to have non-zero probably, Figures~\ref{eta0550}(1-6), the policy assigns it to state $(2,3)$, 
towards the case where only \textit{up} and \textit{right} are permitted with equal probability ($\eta'_7 = ( 0.5,0,0,0.5)$).

Figures~\ref{eta0550}(7) shows the case where only \textit{up} and \textit{right} are allowed. 
In state $(2,3)$, this creates a quandary. \textit{Right} is unlikely to incur the $-1$ penalty, but will 
never allow escape from this state. For this reason, the policy selects \textit{up}, which with high
probability will incur to the $-1$ penalty, but has some probability of escape toward the $+1$ reward.

Figure \ref{eta0550}(8), depicts the convergence of Dykstra towards various $\eta'$. Notably, in all cases
the algorithm converges, and the rate of convergence increases following the order of the subfigures. 

Next, we test the extreme effect of constraints on the state marginals on the policy via various $\rho'$, by setting $\epsilon_2=0$ and $\epsilon_1$ very high. 
We study the policy when $\rho'(s)=.9$ for a single state $s$,  and uniform distribution of $.1$ on the rest of states other than $s$. Figure~\ref{fig:rho1}(1-3) 
shows the policies when $s\in \{(0,2),(1,2),(2,3)\}$. Hitting the wall seems to be viable strategy to stay and increase the visitation frequency of each of these states. 
Figure \ref{fig:rho1}(4) depicts the the convergence of Dykstra's algorithm towards various $\rho'$. As shown, the error never gets to zero. This is because by setting  $\epsilon_1\rightarrow \infty$, the objective is just to find an occupancy measure with closest state marginal to $\rho'$ and
$D_{\psi_1}(\mu\bm{1}\mid\rho')$ can never be zero if $\rho'$ is not from a valid occupancy measure.
\begin{figure}[h!]
\centering
 \begin{subfigure}
    \centering
    \textbf{(1)}\includegraphics[scale=0.2]{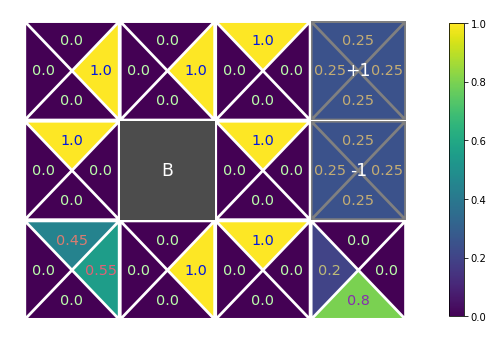}
    \textbf{(2)}\includegraphics[scale=0.2]{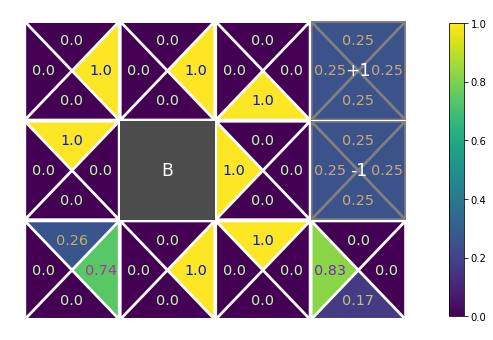}
    \textbf{(3)}\includegraphics[scale=0.2]{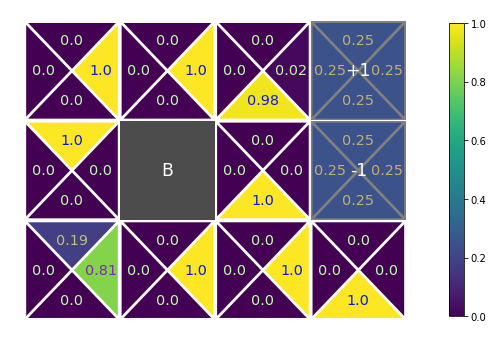}
    \textbf{(4)}\includegraphics[scale=0.18]{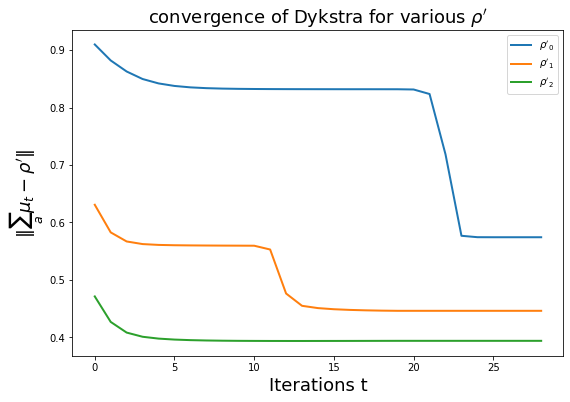}
    \caption{\textbf{(1-3)}: Optimal policy out Dykstra, when $\rho_i'\left(s\right)=.9$  for $s\in\{(0,2),(1,2),(2,3)\}$ and uniform distribution of $.1$ on the rest of states other than $s_i$ for $i\in\{1,2,3\}$. \textbf{(4)}:~corresponding convergence of the Dykstra in (1-3), horizontal axis is the number of iterations $t$ and the vertical axis is $\|\sum_a \mu_{t}-\rho'_i\|$.} 
    \label{fig:rho1}
    \end{subfigure}
\end{figure}

 We also test how imitating a policy with distributional constraints affects the learned policy.
 For this purpose, we create a new environment with reward $-10$ at state $(1,3)$. The optimal risk-averse policy $\pi_{1}$ 
 out of this environment is shown in Figure~\ref{fig:risk-averse}(left). 
 Let $\eta'_{1} = \sum_{s} \mu^{\pi_1}$ be the action marginal corresponding to $\pi_{1}$.
 Now consider the RL problem with reward of $-1$ in state $(1,3)$ constrained by $\eta'_{1}$.
 Figure~\ref{fig:risk-averse}(right) shows the resulting policy $\pi_{2}$.
 Notice that $\mu^{\pi_{2}}$ achieves the action marginal distribution $\eta'_{1}$, 
 however, $\pi_{2}$ is quite different from $\pi_{1}$, 
 since the unconstrained optimal policy for the environment with reward of $-1$ at state $(1,3)$ is more risk neutral. 
 In contrast, distributionally constraining $\rho^{\pi_1}=\sum_a \mu^{\pi_1}$ (as in previous experiments) results 
 in the same policy of $\pi_1$ as in Figure~\ref{fig:risk-averse}(left). 
 The differences are mostly in states $(0,3)$ and $(1,3)$, where actions can be freely chosen (but not their visitation frequency)
 and contribution of state $(2,3)$ which has a lower visitation probability.
 
\begin{figure}[h!]
\centering
 \begin{subfigure}
    \centering
    \includegraphics[scale=0.25]{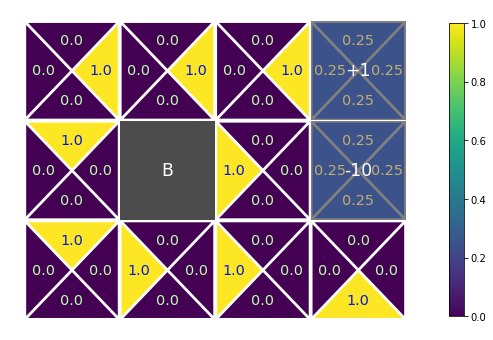}
    \hspace{0.4in}
    \includegraphics[scale=0.25]{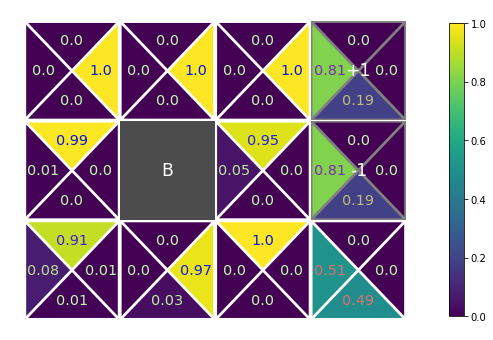}
    \caption{\textbf{Left}: Risk averse optimal policy learned by changing $-1$ to $-10$ in state $(1,3)$. 
    \textbf{Right}: optimal policy using $\eta'$ out of risk-averse policy as a distributional constraint.}
    \label{fig:risk-averse}
    \end{subfigure}
\end{figure}
Consider constraints on both $\eta'$ and $\rho'$. As explained earlier, the policy will then get as close as possible to $\rho'$ while satisfying the action distribution $\eta'$. Figure \ref{fig:no_up}(left) shows the optimal policy for $\eta'$  with no constraint on $\rho'$. $\eta'$  is a distribution where no \textit{up} is allowed and the other three actions equiprobable. Figure~\ref{fig:no_up}(right) depicts the policy under constraints on both $\eta'$ and $\rho'$ when $\rho'$ is the same distribution in Figure \ref{fig:rho1}(1). The leftmost column and top row of this policy leads to (0,2) but in an attempt to satisfy $\rho'$, the policy goes back to the \textit{left}. 
\begin{figure}[h!]
\centering
 \begin{subfigure}
    \centering
    \includegraphics[scale=0.25]{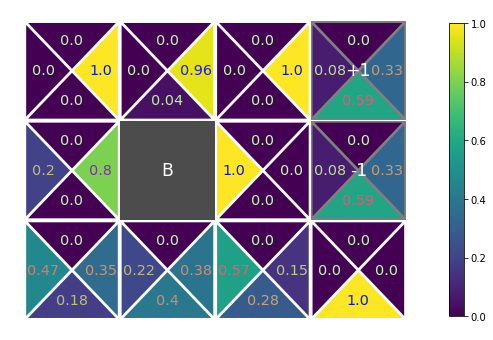}
    \hspace{0.4in}
    \includegraphics[scale=0.25]{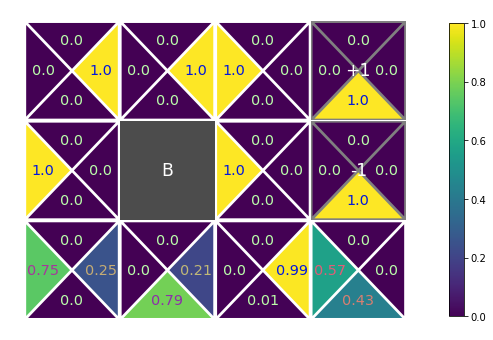}
    \caption{\textbf{Left}: Optimal policy with $\eta'=[e^{-10},\alpha,\alpha,\alpha]$, $\alpha=(1-e^{-10})/3$. \textbf{Right}: Policy with the same $\eta'$ and $\rho_1'$.} \label{fig:no_up}
    \end{subfigure} 
\end{figure}

\section{Related Works}
\label{sec:discussion}
In this section we review and discuss the related works to our proposed approach.
    \paragraph{$\bullet$ Objectives and Constraints in Reinforcement Learning.} 
    Posing policy optimization as a constrained linear programming on the space of occupancy measures has been long studied \cite{puterman2014markov}. Recent works have expanded linear programming view through a more general convex optimization framework. For example, \cite{neu2017unified} unifies policy optimization algorithms in literature for entropy regularized average reward objectives. \cite{nachum2019algaedice,nachum2019dualdice,nachum2020reinforcement} propose entropy-regularized policy optimizations under Fenchel duality to incorporate the occupancy measure constraint. Unlike these works, we  looked at policy optimization from an optimal transport point of view.   To do so, we proposed a structured general objective based on Bregman divergence that allows considering relaxations of entropy regularization using marginals.
    \cite{zhang2020variational} studies a general concave objective in RL and proposes a variational Monte Carlo algorithm using the Fenchel dual of the objective function. Similar to these works we take advantage of Fenchel duality. However, other than different view point  and  structured objective, our work differs in solving the optimization by breaking the objective using Dykstra's algorithm.
    
    \cite{zhang2020cautious} proposes various \textit{caution penalty functions} as the RL objective and a primal-dual approach to solve the optimization. One of these objectives is a penalty function on $\K(\cdot|\rho')$, which is a part of our proposed unbalanced formulation. Other than our problem formulation, in this work, we focused on distributional penalty on global action execution  which, to the best our knowledge, has not been studied before. 
    
    In constrained MDPs, a second reward $c(s,a) < 0$ is used to define an constrained value function $C^{\pi}$ \cite{altman1999constrained, geibel2006reinforcement}.
    Here $C^{\pi}(s) = \mathbb{E}_{\pi}\left[ \sum_{t=0}^{\infty} \gamma^{t} c(s, a)\right]$ and the constraint is in form of $\mathbb{E}_{\pi} \left[C^{\pi}(s)\right] > A$ $(*)$, 
    where $A$ is a constant. Thus considering $c(s,a)$ as the cost for taking action $a$ at state $s$, 
    constrained MDP optimizes the policy with a fixed upper bound for the expected cost. 
    Rather than introducing a fixed scalar restriction (A), 
    our formulation allows distributional constraints over both the action and state spaces (i.e. $\rho'$ and $\eta'$). 
    The source of these distributional constraints may vary from an expert policy to the environmental costs and we can apply them via penalty functions.
    In special cases, when an action can be identified by its individual cost, constraint $(*)$ on expected cost can be viewed as a special case of marginal constraint on $\eta'$.
    For instance, in the grid world of Figure \ref{fig:fig_gw}, if the cost for \textit{up} and  \textit{right} is significantly higher than \textit{down} and \textit{left},
    then limited budget (small expected cost) is essentially equivalent to having a marginal constraint $\eta'_1$ supported on \textit{down} and \textit{left}. 
    However, in general, when cost $c(s, a)$ varies for state per action, 
    the expected cost does not provide much guidance over global usage of actions or visitation of states as our formulation does.

   


    \paragraph{$\bullet$ Reinforcement Learning and Optimal Transport}
        Optimal transport in terms of Wasserstein distance has been proposed in the RL literarture. \cite{zhang2018policy} views policy optimization as Wasserstein gradient flow on the space of policies. \cite{pacchiano2020learning} defines behavioral embedding maps on the space of trajectories and uses an approximation of Wasserstein distance between measures on embedding space as regularization for policy optimization. Marginals of occupancy measures can be viewed as embeddings via state/action distribution extracting maps.  Our work defines an additive structure on these embedding functionals which is broken into Bregman projections using Dykstra.
    \paragraph{$\bullet$ Imitation Learning and Optimal Transport}
        In the imitation learning literature, \cite{xiao2019wasserstein} proposed an adversarial inverse reinforcement learning method which minimizes the Wasserstein distance to the occupancy measure of the expert policy using the dual formulation of optimal transport. \cite{dadashi2020primal} minimized the primal problem of Wasserstein minimization and \cite{papagiannis2020imitation} minimize the \textit{Sinkhorn Divergence} to the expert's occupancy measure. These works are fundamentally different from our approach as we are not solving the inverse RL problem and we view RL itself as a problem of stochastic assignment of actions to states. The type of distributional constraints via unbalanced optimal transport proposed in our work can be considered as relaxation of the idea of matching expert policy occupancy measures. We consider matching the distribution of global action executions and state visitations of the expert policy.
    \paragraph{$\bullet$ Related works in Optimal Transport} It is not the first time (unbalanced) optimal transport is considered constrained. 
        Martingale optimal transport imposes an extra constraint on the mean of the coupling \cite{beiglbock2013model} and using entropic regularization, 
        Dykstra can be applied \cite{henry2013automated}.
   \paragraph{$\bullet$ Other Settings} 
   In unbalanced OT formulation  of \eqref{DCRL} we used penalty functions $D_\psi$. One can apply other functions like the indicator function to enforce constraints on marginals like $\delta_{\eta'}(\mu^T\bm{1})$ as discussed in section \ref{sec:DCPO}. However, using the constraint $\delta_{\rho'}(\mu\bm{1})$ in \eqref{DCRL} could be problematic as it can easily be incompatible with the occupancy measure constraint. If  $\rho'$ is not coming form a policy, then the optimization is infeasible. Despite this, setting $\epsilon_2=0$ and taking $\rho' =\rho^{\pi_{k-1}}$ in \eqref{DCRL} for the $k$'th iteration of an iterative policy optimization algorithm, \eqref{DCRL} results in objective similar to TRPO \cite{schulman2015trust}. 
    
\section{Conclusion}

We have introduced distributionally-constrained policy optimization via unbalanced optimal transport. Extending prior work, we recast RL
as a problem of unbalanced optimal transport via minimization of an objective with a Bregman divergence which is optimized through Dykstra's algorithm. 
We illustrate the theoretical approach through the convergence and policies resulting from marginal constraints on $\eta'$ and $\rho'$ both individually and together. The result unifies different perspectives on RL that naturally allows incorporation of a wide array of realistic constraints on policies.

\bibliographystyle{apalike} 
\bibliography{references}

\appendix
\section{Fenchel Dual and Proofs in Section 3}\label{apd:sec3}
For any function $f:\text{Dom}(f)\rightarrow \mathbb R$, its convex conjugate (or Fenchel dual) $f^*$ is defined as $f^*(y)=\max_{x}\langle x,y\rangle -f(x)$. 
If $f$ is proper, convex and lower-semi continuous then $f^*$ has the same properties and one can write $f(x)=\max_{y} \langle x,y\rangle-f^*(y)$. If $f$ is strictly convex and smooth on $\text{int}(\text{dom}f)$, then $\nabla f$ and $\nabla f^*$ are bijective maps between $\text{int}(\text{dom}f^*)$ 
and \text{int}(\text{dom}f), i.e., $\nabla f^* =\nabla f^{-1}$. It is easy to verify that
\begin{itemize}
    \item  For $f(x)=\delta_{a}(x)$,  $f^*(y)=\langle a,y\rangle$ .
    \item 
    Consider $f$ with $\text{Dom}(f) = \mathcal{M}(Z)$, where $Z$ is an underlying space. For a fixed $p \in \mathcal{M}(Z)$,
    if $f(x)=D_\psi(x|p)$, $f^*(y)=\mathbb E_{p}(\psi^*(y))$.
    Also, if  $\psi(z) =z\log z$, then $f(x)=\text{KL}(x| p)$ and $f^*(y)=\log \mathbb E_{p}(\exp(y))$.
\end{itemize}
\begin{proof}[Proof of Proposition \ref{prop:Dykestra}]
Under condition \eqref{relint_condition}, the Fenchel-Legendre duality holds and the solution of optimization \eqref{General_OT} can be recovered via
\begin{equation}
\label{dual_general_OT}
\begin{split}
\underset{u_1,\cdots,u_N}{\max} -\sum_{i=1}^N \phi_i^*(u_i)-D_{\Gamma}^*\left(-\sum_{i=1}^N u_i|\xi\right) =
\underset{u_1,\cdots,u_N}{\max} -\sum_{i=1}^N \phi_i^*(u_i)-\Gamma^*\left(\nabla\Gamma(\xi)- \sum_{i=1}^N u_i\right) &-\langle\nabla\Gamma(\xi),\xi\rangle+\Gamma(\xi)
\end{split}
\end{equation}
with the primal-dual relationship
\[
\mu =\nabla\Gamma^*\left(-\sum_{i=1}^N u_i\right).
\]
Applying coordinate descent on  \eqref{dual_general_OT}, with initial condition $(u_1^{(0)},\cdots,u_N^{(0)})=(0,\cdots,0)$, and setting $i=[l]_N$, $J =\{1,\cdots,N\}\setminus\{i\}$, at $l>0$ we get the iteration 
\begin{equation}
\label{coordinate descent}
\begin{aligned}
&u_i^{(l)} =\underset{u_i}{\arg\max} \, -\phi_i^*(u_i)-\Gamma^*(q-u_i),\\
&u_j^{(l)} = u_j^{(l-1)}, \forall j\in J,\\ 
\end{aligned}
\end{equation}
where $q=\nabla\Gamma(\xi)-\sum_{j=1}^Ju_j^{(l-1)}$.
The primal problem of optimization in \eqref{coordinate descent} is 
\begin{equation*}
\begin{split}
    \underset{\mu_i}{\arg\min}\, \Gamma(\mu_i)-\langle q,\mu_i\rangle+\phi_i(\pi_i) = \underset{\mu_i}{\arg\min}\, D_\Gamma\left(\mu_i|\nabla\Gamma^*(q)\right)+\phi_i(\mu_i)+\text{const} =\text{Prox}_{\phi_i}^{D_\Gamma}(\nabla\Gamma^*(q)).
\end{split}
\end{equation*}
Hence, under the relation $\mu_i =\nabla\Gamma^*(q-u_i)$, we can rewrite \eqref{coordinate descent} as
\begin{equation*}
u_i^{(l)} =q-\nabla\Gamma \left(\text{Prox}_{\phi_i}^{D_\Gamma}(\nabla\Gamma^*(q))\right).
\end{equation*}
Hence, for $i=[l]_N$, we have $\mu_i^{(l)}=\nabla\Gamma^*(q-u_i^{(l)})$
\begin{equation*}
\begin{split}
\mu^{(l)} &=\nabla\Gamma^*\circ\nabla\Gamma \left(\text{Prox}_{\phi_i}^{D_\Gamma}(\nabla\Gamma^*(q))\right) =\text{Prox}_{\phi_i}^{D_\Gamma}(\nabla\Gamma^*(q))= \text{Prox}_{\phi_i}^{D_\Gamma}\left(\nabla\Gamma^*\left(q- u_i^{(l-N)}+ u_i^{(l-N)}\right)\right)\\ &=\text{Prox}_{\phi_i}^{D_\Gamma}\left(\nabla\Gamma^*\left(\nabla \Gamma(\mu^{(l-1)})+u_i^{(l-N)}\right)\right).
\end{split}
\end{equation*}
Calculating the difference $ \nabla\Gamma(\mu^{(l-1)}) -\nabla\Gamma(\mu^{(l)})$ and change of variable $\nu^{(l)} = -u_{[l]_N}^{(l)}$, ends the proof.
\end{proof}
\begin{proof}[Proof of Corollary \ref{cor:dykstra_kl}]\cite{peyre2015entropic}
Setting $\Gamma=\mathcal H$, then $\nabla \Gamma=\log$ and $\nabla \Gamma^*=\exp$. Also,   $D_{\Gamma}=\K$ and by change of variable $z^{(l)}=\nabla \Gamma(v^{(l)})$, Corollary follows.
\end{proof}

\section{Derivations in Section 4}\label{apd:sec4}
Here we derive the Proximal operators  in section~\eqref{sec:DCPO}.
\subsection{Proximal Operator Calculations when $D_\Gamma=\mathrm{KL}$}
When $\normalfont D_\Gamma=\kl$ (i.e., $\Gamma =\mathcal H$), for $\phi_3(\mu)=\delta_{b^\mu}(A^\mu \mu)$ we have:
\begin{equation}
\label{phi_3_app_1}
\text{Prox}_{\phi_3}^{\text{KL}} (\mu)=\arg \underset{\tilde \mu \in \Delta}\min\, \text{KL}(\tilde \mu|\mu),
\end{equation}
letting  $V,\lambda$ to be dual variables, by the definition of $\Delta$ in lemma \ref{syed}, the lagrangian  of \eqref{phi_3_app_1} is
\[
\begin{split}
\K(\tilde \mu|\mu)-\gamma\sum_{s,s',a'}P(s|s',a')\mu(s',a')V(s)-(1-\gamma)\sum_s p_0(s)V(s)+\sum_{s,a} \mu(s,a)V(s)+\lambda \left( \sum_{s,a}\mu(s,a)-1\right).
\end{split}
\]
The derivative of the Lagrangian with respect to $\tilde \mu(s,a)$ for \eqref{phi_3_app_1} is
\[
\log(\tilde \mu(s,a)/\mu(s,a))-\gamma\sum_{s'}P(s'|s,a)V(s')+V(s)+\lambda=0.
\]
So, the optimal solution is
\[
\tilde \mu(s,a) =e^{-\lambda}\mu(s,a)\exp(\gamma \sum_{s'}P(s'|s,a)V(s')-V(s)).
\]
Since $\sum_{s,a}\tilde\mu(s,a)=1$, we have $$\lambda=\log\sum_{s,a}\mu(s,a)\exp \left (\gamma \sum_{s'}P(s'|s,a)V(s')-V(s)\right),$$ then
\begin{equation}
\label{primal}
\tilde \mu(s,a) =\frac{\mu(s,a)\exp \left (\gamma \sum_{s'}P(s'|s,a)V(s')-V(s)\right)}{\sum_{s,a}\mu(s,a)\exp(\gamma \sum_{s'}P(s'|s,a)V(s')-V(s))},
\end{equation}
where $V$ is the solution of the dual problem
\begin{equation}
\label{dual}
\begin{split}
&\min_{\lambda,V} \lambda +(1-\gamma)\sum_s p_0(s)V(s)= \min_{V} \log\sum_{s,a}\mu(s,a)\exp(\gamma \sum_{s'}P(s'|s,a)V(s')-V(s))+(1-\gamma)\sum_s p_0(s)V(s).
\end{split}
\end{equation}
By solving  optimization \eqref{dual}, we can recover optimal $\tilde \mu$ by equation \eqref{primal}.

For other proximal operators we need the following lemma:
\begin{lemma}\cite{peyre2015entropic}\label{main_app} 
For any convex function $h$:
\begin{itemize}
    \item[(i)] For any $\phi(\mu)=h(\mu\bm{1})$
    \[
    \text{Prox}_{\phi}^{\text{KL}}(\mu) = \text{diag}\left(\frac{\text{Prox}_h^{\text{KL}}(\mu \bm{1})}{\mu \bm{1}}\right)\mu,
    \]
    \item[(ii)] and, if $\phi(\mu)=h(\mu^T\bm{1})$
    \[
    \text{Prox}_{\phi}^{\text{KL}} (\mu)=\mu\, \text{diag}\left( \frac{\text{Prox}_{h}^{\text{KL}}(\mu^T\bm{1})}{\mu^T\bm{1}}\right).\label{prox2}\\
    \]
\end{itemize}
\end{lemma}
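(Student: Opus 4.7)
The plan is to prove part (i) by a variable-splitting argument on the row-sum vector, then obtain part (ii) by the same argument applied to column sums (equivalently, by transposition). For part (i), introduce the auxiliary variable $r = \tilde\mu\bm{1}$ and rewrite the proximal problem as a two-level minimization:
\begin{equation*}
\min_{\tilde\mu}\,\text{KL}(\tilde\mu|\mu)+h(\tilde\mu\bm{1}) \;=\; \min_{r}\left\{\min_{\tilde\mu:\,\tilde\mu\bm{1}=r}\text{KL}(\tilde\mu|\mu)\right\}+h(r).
\end{equation*}
The inner problem is a KL projection onto the affine set $\{\tilde\mu:\tilde\mu\bm{1}=r\}$, which decouples row by row.

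For the inner step, I would form the Lagrangian with multipliers $\lambda_x$ for each row constraint $\sum_y \tilde\mu(x,y)=r(x)$ and differentiate; the stationarity condition gives $\tilde\mu(x,y)=\mu(x,y)e^{-\lambda_x}$, and summing over $y$ fixes $e^{-\lambda_x}=r(x)/(\mu\bm{1})(x)$. Hence the optimizer of the inner problem is
\begin{equation*}
\tilde\mu^{\,r}(x,y)\;=\;\mu(x,y)\cdot\frac{r(x)}{(\mu\bm{1})(x)}.
\end{equation*}
Plugging this back into $\text{KL}(\tilde\mu|\mu)=\sum_{x,y}\tilde\mu\log(\tilde\mu/\mu)-\tilde\mu+\mu$ and using $\sum_y\mu(x,y)=(\mu\bm{1})(x)$ collapses the sum to $\text{KL}(r|\mu\bm{1})$. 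The outer problem therefore becomes $\min_r \text{KL}(r|\mu\bm{1})+h(r)$, whose minimizer is exactly $r^* = \text{Prox}_h^{\text{KL}}(\mu\bm{1})$ by definition. Substituting $r^*$ into the formula for $\tilde\mu^{\,r}$ gives the advertised diagonal rescaling, which can be written compactly as $\text{diag}(r^*/(\mu\bm{1}))\,\mu$.

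The only real bookkeeping is verifying the KL reduction identity, i.e., that after substituting the inner optimum the KL divergence collapses exactly to $\text{KL}(r|\mu\bm{1})$; this is a direct computation using $\sum_y\mu(x,y)=(\mu\bm{1})(x)$ and is the step most prone to sign/bookkeeping errors, but no hard analysis is involved. Technical hygiene requires the qualification $(\mu\bm{1})(x)>0$ for each $x$ where $r(x)>0$, which holds on the relevant domain since the KL term already forces $\tilde\mu\ll\mu$. Part (ii) is obtained by the identical argument with rows and columns swapped: set $c=\tilde\mu^T\bm{1}$, solve the column-wise KL projection to obtain $\tilde\mu^{\,c}(x,y)=\mu(x,y)\cdot c(y)/(\mu^T\bm{1})(y)$, reduce the inner value to $\text{KL}(c|\mu^T\bm{1})$, and conclude with $c^*=\text{Prox}_h^{\text{KL}}(\mu^T\bm{1})$; equivalently, apply part (i) to $\mu^T$ and transpose.
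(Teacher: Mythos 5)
Your proof is correct, but it takes a genuinely different route from the paper's. The paper argues via first-order optimality conditions at the matrix level: writing the stationarity condition for $\text{Prox}_\phi^{\text{KL}}(\mu)$ as $\log(\tilde\mu/\mu)+Z=0$ for some $Z\in\partial h(\tilde\mu\bm{1})$, so that $\tilde\mu=\text{diag}(e^{-Z})\mu$, and then observing that the \emph{same} subgradient $Z$ satisfies the first-order condition of the vector problem defining $\text{Prox}_h^{\text{KL}}(\mu\bm{1})$, which identifies $e^{-Z}$ with $\text{Prox}_h^{\text{KL}}(\mu\bm{1})/(\mu\bm{1})$. You instead perform an explicit partial minimization: split off the row-sum variable $r=\tilde\mu\bm{1}$, solve the inner KL projection onto $\{\tilde\mu:\tilde\mu\bm{1}=r\}$ in closed form, and verify that its optimal value collapses to $\text{KL}(r|\mu\bm{1})$, reducing the outer problem to the definition of $\text{Prox}_h^{\text{KL}}(\mu\bm{1})$. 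Your version is somewhat longer but more self-contained and constructive: it makes the convexity of the reduced problem and the validity of the reduction explicit, and it sidesteps the slightly delicate step in the paper's argument of matching the two subgradient conditions through a common $Z$ (which is where a reader might want justification that the vector $\tilde\mu\bm{1}$ is indeed the minimizer of the reduced prox problem). The paper's version buys brevity and generalizes immediately to part (ii) by transposition, which your argument also does. Your KL reduction identity is verified correctly, including the linear correction terms $-p+q$ in the generalized KL, which is exactly what makes the computation work for unnormalized measures.
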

\begin{proof}
Let
\[
\tilde \mu :=\text{Prox}_{\phi}^{\text{KL}}(\mu) = \arg\min_{\tilde \mu} \text{KL}(\tilde \mu | \mu) + h(\tilde \mu\bm{1}),
\]
hence, by the first order condition, at the optimal $\tilde \mu$, there exists $Z \in \partial h(\tilde \mu \bm{1})$ such that we have $\log{\frac{\tilde \mu}{\mu}}+Z=0$. Therefore,
\begin{equation}
\label{PROX_Z}
    \tilde \mu = \text{diag}\left(e^{-Z}\right)\mu.
\end{equation}
Similarly, considering the first order condition corresponding to  $\tilde u:=\text{Prox}_h^{\text{KL}}(\mu\bm{1})$, we get $\tilde u = \text{diag}(e^{-Z})\mu \bm{1}$, and combining this with \eqref{PROX_Z}  proves part (i). Transposing (i), results in (ii).
\end{proof}

Using this lemma,  we calculate some of the proximal operators we used in Section \ref{sec:DCPO} and \ref{sec:demo}.
\begin{itemize}
    \item 
Let $\rho :=\mu\bm{1}$. Given $\rho' \in \mathcal M(\mathcal S)$, when $\phi_1(\mu)=\epsilon_1 D_{\psi_1}(\rho|\rho')=\epsilon_1 \K(\rho|\rho')$ , 
$$
\text{Prox}_{\epsilon_1\K}^{\K}(\rho) =\arg\min_{\tilde \rho} \K(\tilde \rho|\rho)+\epsilon_1 \K(\tilde \rho|\rho' ),
$$
then, by the first order condition, we have
$$
\log\frac{\tilde \rho}{\rho}+\epsilon_1 \log\frac{\tilde \rho}{\rho'}=0,
$$
and $\text{Prox}_{\epsilon_1\K}^{\K}(\mu\bm{1}) =(\rho {\rho'}^{\epsilon_1})^{1/(1+\epsilon_1)}$. Applying lemma \ref{main_app}(i), gives the proximal operator for $\phi_1(\mu)$.
\item When $\phi_1(\mu)=\delta_{\rho'}(\rho)$, then $\text{Prox}^{\text{KL}}_{\phi_1}(\rho)=\rho'$ and by applying lemma  \ref{main_app}(i), we get 
 \[
    \text{Prox}_{\phi_1}^{\text{KL}}(\mu) = \text{diag}\left(\frac{\rho'}{\mu \bm{1}}\right)\mu.
    \]
\item
Similarly,  given $\eta' \in \mathcal M(\mathcal A)$. If $\eta  := \mu^T \bm{1}$ and $\phi_2=\epsilon_2 D_{\psi_2}(\eta|\eta')=\epsilon_2 \K(\eta|\eta')$,
$$
\text{Prox}_{\epsilon_2\K}^{\K}(\eta) =\arg\min_{\tilde \eta} \K(\tilde \eta|\eta)+\epsilon_2 \K(\tilde \eta|\eta' ),
$$
and $\text{Prox}_{\epsilon_2\K}^{\K}(\mu^T\bm{1}) =(\eta {\eta'}^{\epsilon_2})^{1/(1+\epsilon_2)}$. Applying lemma \ref{main_app}(ii) results in the proximal operator for $\phi_2$. \item Also, when $\phi_2(\mu)=\delta_{\eta'}(\mu^T\bm{1})$,  by lemma \ref{main_app}(ii), we get 
    \[
    \text{Prox}_{\phi_2}^{\text{KL}} (\mu)=\mu\, \text{diag}\left( \frac{\eta'}{\mu^T\bm{1}}\right).
    \]
\end{itemize}
\subsection{Convergence of the iterative policy optimization for problems \eqref{DCRL} and \eqref{DCRL2}}
\label{sec:conv_big_loop}


We now show the convergence and policy improvement of optimization \eqref{DCRL}, and \eqref{DCRL2} in an iterative policy optimization scenario.

\begin{proposition} 
  Let $(\mu_k)_{k\in\mathbb{N}}$ be a sequence of occupancy measures such that
  for each $k\ge1$, $\mu_k$ is the
  solution to the optimization  \eqref{DCRL} with $\mu'=\mu_{k-1}$,
  then $\lim\limits_{k\rightarrow\infty}\mu_k = \mu^{\ast}$ is the solution to 
  $\underset{\mu \in \Delta}\max \mathbb E_\mu[r]$.
 \end{proposition}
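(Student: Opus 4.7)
The plan is to establish (a) monotone policy improvement, (b) vanishing of consecutive-iterate divergences, and (c) first-order optimality at every subsequential limit. For (a), observe that $\mu_{k-1}$ is feasible for the iteration-$k$ problem; since $\psi_i(1)=0$ the penalty terms $D_{\psi_i}(\mu_{k-1}\bm{1}\mid\mu_{k-1}\bm{1})$ both vanish, so the objective value at $\mu_{k-1}$ equals $\mathbb E_{\mu_{k-1}}[r]$. By optimality of $\mu_k$,
\[
\mathbb E_{\mu_k}[r] - \epsilon_1 D_{\psi_1}(\mu_k\bm{1}\mid\mu_{k-1}\bm{1}) - \epsilon_2 D_{\psi_2}(\mu_k^T\bm{1}\mid\mu_{k-1}^T\bm{1}) \;\geq\; \mathbb E_{\mu_{k-1}}[r],
\]
and non-negativity of the $\psi$-divergences yields $\mathbb E_{\mu_k}[r]\geq\mathbb E_{\mu_{k-1}}[r]$.

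For (b), since $r$ is bounded and $\Delta$ is compact, the monotone sequence $\mathbb E_{\mu_k}[r]$ converges. Telescoping the display above gives
\[
\sum_k \bigl[\epsilon_1 D_{\psi_1}(\mu_k\bm{1}\mid\mu_{k-1}\bm{1}) + \epsilon_2 D_{\psi_2}(\mu_k^T\bm{1}\mid\mu_{k-1}^T\bm{1})\bigr] < \infty,
\]
so each divergence tends to zero. Under mild regularity of $\psi_i$ (strict convexity together with a Pinsker-type bound, as in the KL case) this forces the consecutive state and action marginals to become asymptotically identical. By compactness, extract a subsequence along which $\mu_{k_j}\to\mu^\star$ and $\mu_{k_j-1}\to\mu^{\star\star}$, so that $\mu^\star$ and $\mu^{\star\star}$ share both state and action marginals.

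For (c), each $\mu_{k_j}$ satisfies the variational inequality on $\Delta$ for its regularized objective: writing $g_i^{(j)}$ for the gradient of the $i$-th penalty evaluated at $\mu_{k_j}$,
\[
\langle r - \epsilon_1 g_1^{(j)} - \epsilon_2 g_2^{(j)},\; \tilde\mu - \mu_{k_j}\rangle \leq 0 \quad\text{for all }\tilde\mu\in\Delta.
\]
The key observation is that $g_1^{(j)}(s,a)=\psi_1'(\mu_{k_j}\bm{1}(s)/\mu_{k_j-1}\bm{1}(s))$, which in the limit becomes the constant $\psi_1'(1)$ on $\mathcal S\times\mathcal A$; likewise $g_2^{(j)}(s,a)\to\psi_2'(1)$. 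Since both $\tilde\mu$ and $\mu^\star$ have unit total mass, these constant gradients contribute zero to the inner product with $\tilde\mu-\mu^\star$, leaving $\langle r,\tilde\mu-\mu^\star\rangle\leq 0$ for all $\tilde\mu\in\Delta$. Since $\mathbb E_\mu[r]$ is linear on the polytope $\Delta$, this first-order condition certifies $\mu^\star$ as a global maximizer.

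The main obstacle is the inference in (b) from $D_{\psi_i}\to 0$ to convergence of the associated marginals; this requires a quantitative regularity assumption on $\psi_i$ (trivial for KL via Pinsker, but needing care in general). A subsidiary issue is uniqueness: if the unregularized LP admits several maximizers, the argument only certifies that \emph{every} subsequential limit is optimal, so convergence of the full sequence $\mu_k$ to a single $\mu^\star$ requires either uniqueness of the LP solution or a tie-breaking argument leveraging the divergence geometry.
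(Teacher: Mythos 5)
Your proof is essentially sound in its main thrust but takes a genuinely different route from the paper's. You share the first step: both arguments exploit feasibility of $\mu_{k-1}$ and non-negativity of the divergences to get monotone improvement of $\Theta(\mu)=\mathbb E_\mu[r]$, with the penalty vanishing at $\mu=\mu_{k-1}$ because $\psi_i(1)=0$. After that you diverge. The paper runs a Zangwill-style global convergence argument: it asserts that the iteration map $\Psi$ is continuous (via strict concavity of the per-iteration objective and smoothness in $\mu'$), takes the limit value $\theta_\infty$ of the monotone sequence $\Theta(\mu_k)$, and derives a contradiction from compactness of the level set $\Theta^{-1}(\theta_\infty)$ --- if $\theta_\infty$ were suboptimal, the improvement $\Theta(\Psi(\mu))-\Theta(\mu)$ would be uniformly bounded below near that level set, forcing the sequence to overshoot $\theta_\infty$. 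You instead telescope the improvement inequality to conclude that the consecutive-iterate divergences are summable, hence vanish, and then pass the first-order optimality (variational inequality) condition to a subsequential limit, using the fact that the penalty gradients tend to the constants $\psi_i'(1)$, which annihilate differences of probability measures, leaving $\langle r,\tilde\mu-\mu^\star\rangle\le 0$ on the polytope. Your route buys a more quantitative and self-contained argument that avoids having to prove continuity of the argmax map $\Psi$ (which is actually delicate here: the per-iteration objective is \emph{not} strictly concave in $\mu$, since the penalties depend on $\mu$ only through its marginals and $\Theta$ is linear, so $\Psi$ need not be single-valued --- a weak point of the paper's own argument). The paper's route, when it works, delivers convergence of the value sequence to the optimum directly without manipulating gradients at the boundary.

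Two residual issues deserve naming. First, your step (c) needs the ratios $\mu_{k_j}\bm{1}(s)/\mu_{k_j-1}\bm{1}(s)$ to converge to $1$ at \emph{every} state, including those where the limiting marginal vanishes; for KL the gradient $\psi'(x)=\log x+1$ blows up there, so the claim that $g_i^{(j)}$ converges to a constant on all of $\mathcal S\times\mathcal A$ requires either marginals bounded away from zero or a separate treatment of the vanishing states. Second, the uniqueness gap you flag is real: your argument certifies that every subsequential limit maximizes the linear objective, not that the full sequence converges to a single point. Note, however, that the paper's proof suffers from the same defect --- its final appeal to ``strict convexity of $\Theta$'' to conclude $\Theta^{-1}(\theta_\infty)=\{\mu^\ast\}$ is unjustified, since $\Theta$ is linear and the LP can have a face of maximizers. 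So on both counts your honesty about the gaps is a feature, not a deficiency relative to the paper.
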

\begin{proof}
  Let $\mu_0 \in \Delta$ such that $\mu_0(s,a)>0$ for all $(s,a)$.
Let $\mu_k=\Psi(\mu_{k-1})$ for all $k>0$ where $\Psi(\mu')$ is the solution
  to \eqref{DCRL2}.
  Let $\Theta(\mu)= \mathbb E_\mu[r]$, since
  $\mu_{k-1}$ is a feasible point for optimization \eqref{DCRL}, we have monotonic improvement on the discounted accumulated rewards as
  \begin{align}
      \Theta(\mu_{k-1}) \leq 
      \max_{\mu \in \Delta} \Theta(\mu)- \epsilon_1 D_{\psi_1}(\mu\bm{1} | \mu_{k-1}\bm{1}) - \epsilon_2 D_{\psi_2}(\mu^T\bm{1} | \mu_{k-1}^T\bm{1})  \leq \Theta(\mu_{k}),
  \end{align}
  with equality achieved only if $\mu_{k-1}=\mu_{k}=\mu^{\ast}$.

  $\Theta(\mu)-\epsilon_1 D_{\psi_1}(\sum_{a}\mu|\sum_{a}\mu') - \epsilon_2 D_{\psi_2}(\sum_{s}\mu|\sum_{s}\mu')$ is
  strictly concave on $\mu$ and smooth on $\mu'$. 
  Thus $\Psi(\mu')$ is continuous
  and so is $\Theta(\Psi(\mu'))$. 

  Since we assumed $\mathcal S, \mathcal A$ are finite, from monotone convergence theorem, we can conclude that
  $\lim_{k\rightarrow\infty}\Theta(\mu_{k})=:\theta_\infty$ exists.
  Then we prove the theorem by contradiction. Suppose that
  $\theta_\infty<\Theta(\mu^\ast)=\mathrm{max}_{\Delta_\eta}\Theta$,
  then $\Theta^{-1}(\theta_\infty)$ is closed and bounded, thus
  compact in $\Delta$, therefore there exists a $c>0$ such that
  $\Theta(\Psi(\mu))-\Theta(\mu)\ge c$ on $\Theta^{-1}(\theta_\infty)$.
  By continuity and compactness we may choose a $\delta<c/2$ small
  enough such that $\Theta(\Psi(\mu))-\Theta(\mu)>c/2$ on
  $\Theta^{-1}([\theta_\infty-\delta,\theta_\infty])$.
  There is such a $\delta(\mu)>0$ for
  every $\mu\in\Theta^{-1}(\theta_\infty)$ and we can choose $\delta=\min\delta(\mu)>0$ by compactness.

  Since $\lim_{k\rightarrow\infty}\Theta(\mu_k)=\theta_\infty$, there exists
  an $n>0$ such that $\Theta(\mu_n)\in[\theta_\infty-\delta,\theta_\infty]$,
  thus
  $\Theta(\mu_{n+1})=\Theta(\Psi(\mu_n))>\Theta(\mu_n)+c/2\ge\theta_\infty-\delta+c/2>\theta_\infty$.
  This is an contradiction to the assumption that $\Theta(\mu_k)$ converges to
  $\theta_\infty$ increasingly.

  Therefore, $\theta_\infty=\Theta(\mu^\ast)$, and from
  strict convexity of $\Theta$, we have
  $\Theta^{-1}(\theta_\infty)=\{\mu^\ast\}$. So
  $\lim_{k\rightarrow\infty}\mu_k=\mu^\ast$.
\end{proof}

Similarly, for the optimization \eqref{DCRL2}:
\begin{align*}
 \underset{\mu \in \Delta }\max -\K(\mu \mid \xi) - &\epsilon_1 D_{\psi_1}\left(\mu\bm{1}\mid  \rho'\right) 
-\epsilon_2 D_{\psi_2}\left(\mu^T\bm{1}\mid \eta'\right),
\end{align*}
we can have the following proposition:
\begin{proposition}
  Let $(\mu_k)_{k\in\mathbb{N}}$ be a sequence of occupancy measures such that
  for each $k\ge1$, $\mu_k$ is the
  solution to the optimization problem \eqref{DCRL2} with $\mu'=\mu_{k-1}$,
  then $\lim\limits_{k\rightarrow\infty}\mu_k = \mu^{\ast}$ is the solution to 
  $\underset{\mu \in \Delta}\max -\K(\mu \mid \xi)$.
\end{proposition}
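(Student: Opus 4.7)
The plan is to mirror the proof of the preceding proposition almost verbatim, with $\Theta(\mu) := -\kl(\mu \mid \xi)$ replacing $\mathbb E_\mu[r]$, since the argument only used (a) monotone improvement from feasibility of $\mu_{k-1}$, (b) continuity of the update map, (c) compactness of $\Delta$, and (d) strict concavity of the objective. All four properties carry over: $-\kl(\cdot \mid \xi)$ is strictly concave and continuous on $\Delta$; the penalized objective remains strictly concave in $\mu$ and smooth in $\mu'$, so the $\arg\max$ map $\Psi$ is continuous; and $\Delta$ is still compact.

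First I would establish monotone improvement. Let $\Psi(\mu')$ denote the solution to \eqref{DCRL2} with $\rho' = \sum_a \mu'$ and $\eta' = \sum_s \mu'$, and set $\mu_k = \Psi(\mu_{k-1})$. The crucial observation is that $\mu_{k-1}$ is a feasible point for the optimization defining $\mu_k$, and plugging it in gives penalty value zero (since $D_{\psi_i}(p\mid p) = 0$). Therefore
\begin{align*}
\Theta(\mu_{k-1})
&= \Theta(\mu_{k-1}) - \epsilon_1 D_{\psi_1}\bigl(\mu_{k-1}\bm 1 \mid \mu_{k-1}\bm 1\bigr) - \epsilon_2 D_{\psi_2}\bigl(\mu_{k-1}^T\bm 1 \mid \mu_{k-1}^T\bm 1\bigr) \\
&\le \Theta(\mu_k) - \epsilon_1 D_{\psi_1}\bigl(\mu_k\bm 1 \mid \mu_{k-1}\bm 1\bigr) - \epsilon_2 D_{\psi_2}\bigl(\mu_k^T\bm 1 \mid \mu_{k-1}^T\bm 1\bigr) \\
&\le \Theta(\mu_k),
\end{align*}
with equality holding only when the penalties vanish and $\mu_{k-1}$ already attains the maximum, i.e.\ $\mu_{k-1} = \mu_k = \mu^\ast$.

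Next I would invoke the monotone convergence theorem: since $\Theta$ is bounded above on the compact set $\Delta$ and $\Theta(\mu_k)$ is nondecreasing, $\theta_\infty := \lim_k \Theta(\mu_k)$ exists. Suppose for contradiction that $\theta_\infty < \Theta(\mu^\ast)$. Then the preimage $\Theta^{-1}(\theta_\infty) \subset \Delta$ is compact and disjoint from $\{\mu^\ast\}$, and on this set strict improvement $\Theta(\Psi(\mu)) - \Theta(\mu) > 0$ holds by the equality case above. By continuity of $\Theta \circ \Psi - \Theta$ and compactness, there exist $c > 0$ and a neighborhood $\Theta^{-1}([\theta_\infty - \delta, \theta_\infty])$ on which $\Theta(\Psi(\mu)) - \Theta(\mu) > c/2$. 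Eventually $\Theta(\mu_n)$ enters this neighborhood, forcing $\Theta(\mu_{n+1}) > \theta_\infty$, contradicting the choice of $\theta_\infty$. Hence $\theta_\infty = \Theta(\mu^\ast)$, and strict concavity of $\Theta$ forces $\Theta^{-1}(\theta_\infty) = \{\mu^\ast\}$, so $\mu_k \to \mu^\ast$.

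The main obstacle is verifying continuity of $\Psi$ rigorously; this requires checking that the penalized objective, regarded as a function of the pair $(\mu, \mu')$, is jointly continuous (or at least upper semicontinuous in $\mu'$ in the sense needed for Berge's maximum theorem) and that its $\arg\max$ in $\mu$ is a singleton for each $\mu'$. Strict concavity in $\mu$ handles uniqueness; the smoothness of the divergences $D_{\psi_i}$ in their second argument handles continuity in $\mu'$, provided one restricts attention to $\mu'$ whose marginals lie in the relative interior (guaranteed if the initial $\mu_0 > 0$ everywhere and inductively propagates, since the solution of a KL-regularized problem stays strictly positive). With these points addressed, the remainder of the argument is essentially identical to that of the previous proposition.
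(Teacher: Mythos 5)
Your proof is correct and is exactly the argument the paper intends: the paper states this proposition without its own proof, remarking only that it follows ``similarly'' to the preceding one, and your substitution of $\Theta(\mu) = -\kl(\mu\mid\xi)$ for $\mathbb{E}_\mu[r]$ in that argument, together with the observation that monotone improvement, continuity of $\Psi$, compactness of $\Delta$, and the concavity needed at the end all carry over, is precisely that analogy. If anything your version rests on firmer ground, since $-\kl(\cdot\mid\xi)$ is genuinely strictly concave (unlike the linear objective of the previous proposition), which is what the final step identifying $\Theta^{-1}(\theta_\infty)=\{\mu^\ast\}$ actually requires.
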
 
 
\section{Section \ref{AC} Derivations}\label{apd:sec4_1}
Here we derive optimization \eqref{ac1}. Let's fix policy $\pi$, then following the policy optimization approach in \cite{nachum2020reinforcement},  one might define the policy evaluation problem for $\pi$ as $\max_\mu h(\mu)-\delta_{b^\pi}(A^\pi \mu)$ for any arbitrary concave function $h$ as the problem is over-constrained and $\mu$ is the unique solution of $A^\pi \mu =b^\pi$. So we define the policy evaluation problem for a fixed $\pi$ corresponding to \eqref{DCRL} as
\begin{equation}
\label{appendix_DCRL0}
\begin{split}
&\max_\mu  \mathbb E_{\mu}\left[r\right]- \epsilon_1 D_{\psi_1}(\mu\bm{1}\mid \rho') - \epsilon_2 D_{\psi_2}(\mu^T\bm{1}\mid \eta')-\delta_{b^\pi}(A^\pi \mu)\\
&\stackrel{(a)}{=} \max_\mu \min_{u,v,Q} \mathbb \langle\mu,r\rangle -\epsilon_1\langle \mu\bm{1},u\rangle +\epsilon_1\mathbb E_{\rho'}[\psi_1^*({u(s))}]-\epsilon_2 \langle \mu^T\bm{1},v\rangle+\epsilon_2 \mathbb E_{\eta'}[\psi_2^*(v(a))]-\langle A^\pi\mu,Q\rangle
+\mathbb E_{b^\pi}[Q(s,a)] 
\\
&=\max_\mu \min_{u,v,Q} \mathbb \langle \mu, r-A^{{\pi}*} Q -\epsilon_1 u\bm{1_A}^T-\epsilon_2 \bm{1_S}v^T\rangle 
+\mathbb E_{b^\pi}[Q(s,a)]+\epsilon_1\mathbb E_{\rho'}[\psi_1^*({u(s))}]
+\epsilon_2 \mathbb E_{\eta'}[\psi_2^*(v(a))],
\end{split}
\end{equation}
where (a) is obtained by replacing the last three terms by their convex conjugates and $A^{{\pi}*}$ is transpose of $A^{{\pi}}$. This gives optimization \eqref{ac1}.
If we regularize  objective \eqref{ac1}  with  $D_{\psi}(\mu\mid \mu')$ regularization, under Fenchel duality  we get
\begin{equation}
\label{appendix_DCRL1}
\begin{split}
&\max_\mu \min_{u,v,Q} \mathbb \langle \mu, r-A^{{\pi}*} Q -\epsilon_1 u\bm{1_A}^T-\epsilon_2 \bm{1_S}v^T\rangle-D_{\psi}(\mu\mid \mu')
+\mathbb E_{b^\pi}[Q(s,a)] +\epsilon_1\mathbb E_{\rho'}[\psi_1^*({u(s))}]
+\epsilon_2 \mathbb E_{\eta'}[\psi_2^*(v(a))]\\
=& \min_{u,v,Q} \left \{\!-\min_\mu\langle\!-\!\mu, \!r\!-\!A^{{\pi}*} Q\!-\!\epsilon_1\!u\bm{1_A}^T-\epsilon_2\!\bm{1_S}v^T\rangle\!+\!D_{\psi}(\mu\!\mid\! \mu')\!\right\}
+\mathbb E_{b^\pi}[Q(s,a)]\!+\!\epsilon_1\mathbb E_{\rho'}[\psi_1^*({u(s))}]
+\epsilon_2 \mathbb E_{\eta'}[\psi_2^*(v(a))]\\
=& \min_{u,v,Q} \mathbb E_{\mu'}\left[\psi^*\left( r(s,a)-A^{{\pi}*} Q(s,a)-\epsilon_1 u(s)-\epsilon_2 v(a)\right)\right]
+\mathbb E_{b^\pi}[Q(s,a)] +\epsilon_1\mathbb E_{\rho'}[\psi_1^*({u(s))}]
+\epsilon_2 \mathbb E_{\eta'}[\psi_2^*(v(a))],
\end{split}
\end{equation}
and wrapping \eqref{appendix_DCRL1} with $\max_{\pi}$ results in \eqref{ac2_pe}.
The gradient in equations \eqref{20}, \eqref{21} and \eqref{22} are basic calculus. Assuming $Q^*,u^*,v^*$ are optimal functions out of the policy evaluation in \eqref{appendix_DCRL1} for the given fixed $\pi$, then using Danskin's theorem, \cite{bertsekas1999nonlinear} we have $\partial_\pi \min_{u,v,Q} \mathcal L(u,v,Q;\pi)= \partial_\pi  \mathcal L(u^*,v^*,Q^*;\pi)$ and 
gradient in \eqref{23} is derived using the facts: \textbf{(A)} $P^\pi Q(s,a)=\mathbb E_{ {s'\sim P(\cdot|s,a)},{a'\sim\pi(\cdot|s')}}[Q(s',a')]$ and  \textbf{(B)} for any distribution $z \sim p$, $\partial_p\mathbb E_p[h(z)]=\mathbb E_p[h(z)\nabla \log p(z)]$. 

In order to derive the objective of \eqref{ac33}, first we interchange $\min_{\mu}$ and $\max_{u,v,Q}$ in \eqref{appendix_DCRL0} by minimax theorem. 
Fixing policy $\pi$, we rewrite the first term in \eqref{appendix_DCRL0} as
\[
\begin{split}
\langle \mu, r-A^{{\pi}*} Q -\epsilon_1 u\bm{1_A}^T-\epsilon_2\bm{1_S}v^T\rangle & = \mathbb E_\mu \left[r(s,a)-A^{{\pi}*} Q(s,a) -\epsilon_1 u(s)-\epsilon_2 v(a)\right] \\&= \mathbb E_{\mu'} \left[\zeta(s,a)\left(r(s,a)-A^{{\pi}*} Q(s,a) -\epsilon_1 u(s)-\epsilon_2 v(a)\right)\right],
\end{split}
\]
where $\zeta(s,a)=\frac{\mu(s,a)}{\mu'(s,a)}$.

Equations~\eqref{25}--\eqref{28} are basic calculus derivations. For \eqref{29},
given optimized $Q, u, v$ and $\zeta$, using \textbf{(A)} and \textbf{(B)}, we can take the gradient of the first two terms in \eqref{appendix_DCRL0}, i.e.,
\begin{equation*}
\begin{split}
\mathbb E_{\mu'} \left[\zeta(s,a)\left(r(s,a)-A^{{\pi}*} Q(s,a) -\epsilon_1 u(s)-\epsilon_2 v(a)\right)\right]+\mathbb E_{b^\pi}[Q(s,a)],
\end{split}
\end{equation*} 
with respect to $\pi$ and combine them under the relation $\mu(s,a)=(1-\gamma) p_0(s)\pi(a|s)+\gamma\pi(a|s)\sum_{s',a'} P(s|s',a') \mu(s',a')$  to get \eqref{29}. 

These two approaches in Section \ref{sec:DCPO}, are similar to policy gradient derivations in \cite{nachum2020reinforcement}, with corrective terms on $u(s)$ and $v(a)$ in the softmax operator defined in the main text.




 
 

\section{Demonstrations Setup}\label{apd:demo}
In Section \ref{sec:demo}, we set $D_{\psi_1}=D_{\psi_2}=\K$ and we applied Dykstra in the gridworld. We used the Frobenius norm on the difference of two consecutive matrices out of Dykstra until the error is less than $10^{-5}$. 


 In order to see the extreme effect of $\rho'$ independently (setting $\epsilon_2=0$), we can enforce it as $\delta_{\rho'}(\mu 1)$, even though  it might not be possible to find a $\mu$ such $\mu \bm{1}=\rho'$ as discussed in Section~\ref{sec:demo}. In this setting, we observed Dykstra gets stuck switching back and forth between projection onto $\rho'$ and projection onto occupancy measures $\Delta$ and we get division by zero exception. 
 In our experiments we observed similar outcome when using the penalty function $\K(\mu \bm{1}|\rho')$
 with high coefficient $\epsilon_1$ (close to 20). 


\end{document}